\pdfoutput=1
\documentclass{article}

\PassOptionsToPackage{numbers, compress}{natbib}
\usepackage[preprint]{neurips_2026}

\usepackage{amsmath}
\usepackage{amsthm}
\usepackage{graphicx}
\usepackage{tcolorbox}
\usepackage{subcaption}
\usepackage{algorithm}
\usepackage{algorithmicx}
\usepackage{algcompatible}
\usepackage{algpseudocode}

\usepackage[utf8]{inputenc} 
\usepackage[T1]{fontenc}    
\usepackage{hyperref}       
\usepackage{url}            
\usepackage{booktabs}       
\usepackage{amsfonts}       
\usepackage{nicefrac}       
\usepackage{microtype}      
\usepackage{xcolor}         

\newtheorem{theorem}{Theorem}
\newtheorem{proposition}{Proposition}
\newtheorem{lemma}{Lemma}
\newtheorem{corollary}{Corollary}
\title{Predictive Geometry of Hidden Trajectories in Transformers}

\author{%
  Timur Mudarisov$^{1}$ \quad
  Mikhail Burtsev$^{2}$ \quad
  Tatiana Petrova$^{1}$ \quad
  Radu State$^{1}$ \\[0.6em]
  \normalfont $^{1}$University of Luxembourg, Luxembourg \\
  \normalfont $^{2}$London Institute for Mathematical Sciences, London, UK
}

\begin{document}

\maketitle

\begin{abstract}
Decoder-only transformers are trained only through a terminal next-token prediction loss, yet this loss constrains every intermediate hidden state through the fixed downstream computation. We formalize this constraint by studying layerwise loss-to-go functions: the terminal loss obtained by continuing a candidate hidden state through the remaining transformer blocks. Around successful validation trajectories, we show that the local second-order geometry of these functions is governed, up to low-loss residual terms, by a pullback Fisher operator on hidden-state space. Its spectrum identifies output-sensitive directions and approximately prediction-null directions, yielding a local observable subspace of the residual stream. For causal transformers, the same geometry induces a tokenwise curvature score: a Fisher-weighted sensitivity of the target logits to perturbations of each token's hidden state. This score vanishes outside the causal ancestor set of the target and is controlled by downstream Jacobian couplings, making it a loss-aware alternative to attention magnitude. We estimate these quantities using matrix-free Jacobian-vector and vector-Jacobian products and evaluate them across decoder-only language models on WikiText, OpenWebText, and FineWeb. Empirically, the induced geometry predicts perturbation sensitivity, supports nonuniform layerwise rank allocation, yields competitive structured token-pruning signals, and improves low-rank student recovery when added to stronger autoregressive distillation objectives such as reverse KL and skew KL. These results support a predictive-geometric view of transformer computation: near successful trajectories, the terminal loss induces a thin, anisotropic set of output-relevant hidden-state directions that can be measured and exploited for compression and distillation.
\end{abstract}

\section{Introduction}

Large-scale decoder-only transformers \citep{vaswani2017attention, brown2020gpt3}
are optimized through a next-token prediction loss applied only at the output
layer, yet their computation proceeds through a sequence of intermediate hidden
states. This raises a local question about trained models: near hidden-state
trajectories on which the model already predicts well, which directions in
residual-stream space are constrained by the terminal prediction loss, and which
directions are approximately invisible to it? Answering this question would give
a loss-based account of hidden-state geometry, rather than treating intermediate representation geometry as an empirical
property detached from the objective used to train the model.

Recent empirical work suggests that transformer hidden states are structured and
compressible rather than generic high-dimensional vectors. Prior work
\citep{doimo2023geometry} shows that the intrinsic dimension of hidden
representations follows a characteristic expansion-contraction profile across
layers, while activation-space compression methods
\citep{ashkboos2024slicegpt, tian2026flatlm, yuan2023asvd} demonstrate that low-rank
approximations can often preserve model performance. These results indicate that
hidden representations contain substantial redundancy. However, they largely
characterize this structure empirically: they do not derive the relevant
directions from the terminal prediction objective itself, nor do they provide a
loss-aware notion of which hidden-state directions or token positions matter for
the final prediction.

In this work, we start from the prediction loss. For a fixed trained
decoder-only transformer, a layer, and a target position, we define a
\emph{loss-to-go} function: the terminal loss obtained by continuing a candidate
intermediate hidden state through the fixed downstream transformer. We then study
the local second-order geometry of this function around successful validation
trajectories. In the low-loss regime, the dominant curvature is given by a
pullback Fisher operator: the Fisher geometry of the output distribution pulled
back through the downstream computation to the intermediate hidden state. This
operator identifies output-sensitive directions in hidden-state space together
with approximately prediction-null directions that leave the centered target
logits unchanged to first order.

This local output-induced geometry leads to directly measurable quantities. The
spectrum of the pullback Fisher operator measures how many output-sensitive
directions are present at each layer, suggesting a data-dependent alternative to
uniform rank allocation. Taking traces of this operator over token fibers defines a local token-level
sensitivity score, measuring how much each token position contributes to the
curvature of the target prediction. For causal transformers, this score is
supported only on causal ancestors of the target token and is controlled by
downstream Jacobian couplings, making it a loss-aware alternative to
attention-based pruning heuristics
\citep{zerotprune2023, tang2023top, yun2023focus}. We use these quantities as
signals for matched-budget rank allocation and structured token pruning.

Our perspective connects representation geometry
\citep{doimo2023geometry}, compression and low-rank adaptation
\citep{ashkboos2024slicegpt, hu2022lora, hayou2024loraplus, tian2026flatlm},
token pruning \citep{zerotprune2023, tang2023top, yun2023focus},
mechanistic interpretability \citep{elhage2021mathematical}, and Fisher-based
analyses of neural models \citep{amari1998natural, karakida2020fisher}. The
focus here is specifically on the \emph{local predictive geometry} of
intermediate hidden states: the Fisher geometry induced by the terminal
next-token loss and pulled back through the learned forward dynamics of a fixed
trained transformer.
This paper makes the following contributions:

\begin{enumerate}
\item \textbf{Loss-induced hidden geometry.}
We define layerwise loss-to-go functions for fixed trained decoder-only transformers
and show that, near successful hidden-state trajectories, their dominant local
curvature is a pullback Fisher operator from the output distribution to the
intermediate hidden state.

\item\textbf{Observable and prediction-null directions.}
The pullback Fisher operator identifies output-sensitive directions and
approximately prediction-null directions in residual-stream space, yielding a
testable local decomposition of hidden-state perturbations.

\item\textbf{Tokenwise predictive saliency.}
For causal transformers, we derive a tokenwise Fisher-Jacobian score
$\kappa_{\ell,s}$ that measures how much each token position contributes to
target-logit curvature. The score is supported only on causal ancestors of the
target and is controlled by downstream Jacobian couplings.

\item\textbf{Geometry-guided reduction and recovery.}
We evaluate the resulting geometry across perturbation sensitivity, layerwise
rank allocation, structured token pruning, and low-rank student distillation.
Across these settings, the geometry provides loss-aware signals for preserving
prediction-relevant hidden directions under compression.
\end{enumerate}

\section{Problem Statement}

We study a fixed trained decoder-only pre-norm transformer at a fixed target
position \(t\). For an input prefix \(s_{\le t}\), let
\[
X_\ell(s_{\le t}) \in \mathcal X_t := \mathbb R^{t \times d},
\qquad \ell=0,\dots,L,
\]
denote the layer-\(\ell\) hidden state, where \(d\) is the hidden dimension.
The model induces a hidden trajectory
\[
X_0(s_{\le t}) \to X_1(s_{\le t}) \to \cdots \to X_L(s_{\le t}),
\]
and the final prediction at position \(t\) is read out from \(X_L\). Writing
the layer dynamics as
\[
X_{\ell+1}=B_\ell(X_\ell),
\qquad
\Phi_{\ell\to L}:=B_{L-1}\circ\cdots\circ B_\ell,
\]
we analyze the geometry induced by the learned forward dynamics of a
\emph{fixed trained model}, rather than the training dynamics themselves.

To measure prediction quality at position \(t\), let \(\phi\) be a terminal
loss applied after the final readout, such as cross-entropy or
\[
\phi(X_L):=D_{\mathrm{KL}}\!\bigl(q \,\|\, p_t(\cdot\mid X_L)\bigr),
\]
where \(p_t(\cdot\mid X_L)\) is the prediction induced by the final hidden state
\(X_L\). Define the layerwise loss-to-go
\[
\mathcal J_\ell(X):=\phi\!\bigl(\Phi_{\ell\to L}(X)\bigr),
\]
and the associated low-loss set
\[
\mathcal A_\varepsilon^\ell
:=
\{\,X\in\mathcal X_t : \mathcal J_\ell(X)\le \varepsilon\,\}.
\]
Thus, \(\mathcal A_\varepsilon^\ell\) consists of those intermediate states at
layer \(\ell\) that can still evolve, under the fixed downstream transformer, to
terminal loss at most \(\varepsilon\).

Our goal is to characterize the geometry of these layerwise low-loss sets and
its consequences for transformer computation. In particular, we ask whether
successful trajectories remain inside a pullback family of low-loss regions,
whether the local geometry of these regions is effectively lower-dimensional
than the ambient residual space \(\mathcal X_t\), and how this
prediction-relevant geometry is distributed across token positions through the
causal computation graph. Answering these questions gives both a
trajectory-level description of transformer computation beyond the final logits
and a basis for loss-aware model-reduction signals, including layerwise
low-rank compression and token-sparse computation.

\section{Theory}
\label{sec:theory}

We study the local geometry of intermediate hidden states that remain compatible
with low terminal prediction loss. Throughout, we fix a trained decoder-only
pre-norm transformer, a target position \(t\), and a softmax readout, with
terminal loss
\[
\phi(z):=D_{\mathrm{KL}}\!\bigl(q\,\|\,\operatorname{softmax}(z)\bigr),
\qquad
z\in\mathbb R^{|\mathcal V|}.
\]
Proofs are presented in the Appendix\,\ref{ap:theory}.

\subsection{Local geometry of layerwise low-loss sets}

Let
\[
X_{\ell+1}=B_\ell(X_\ell),
\qquad
\Phi_{\ell\to L}:=B_{L-1}\circ\cdots\circ B_\ell,
\qquad
\Psi_\ell(X):=Z_t\!\bigl(\Phi_{\ell\to L}(X)\bigr).
\]
Define the layerwise loss-to-go and low-loss set by
\begin{equation}
\mathcal J_\ell(X):=\phi\!\bigl(\Psi_\ell(X)\bigr),
\qquad
\mathcal A_\varepsilon^\ell
:=
\{X\in\mathcal X_t:\mathcal J_\ell(X)\le\varepsilon\}.
\end{equation}
Then
\begin{equation}
\mathcal A_\varepsilon^\ell
=
\Phi_{\ell\to L}^{-1}(\mathcal A_\varepsilon^L),
\qquad
\mathcal A_\varepsilon^L
=
\{X\in\mathcal X_t:\phi(Z_t(X))\le\varepsilon\},
\end{equation}
so the main question is the \emph{local geometry} of these sets near a
successful hidden-state trajectory.

\begin{theorem}[Local output geometry of low-loss sets]
\label{th1}
Let \(X_\ell^\ast\in\mathcal X_t\) be a reference hidden state with
\[
\varepsilon_\ast:=\mathcal J_\ell(X_\ell^\ast),
\qquad
\mathcal J_\ell(X):=\phi(\Psi_\ell(X)),
\qquad
\phi(z):=D_{\mathrm{KL}}\!\left(q\,\|\,\operatorname{softmax}(z)\right).
\]
Assume that \(\Psi_\ell\) is twice continuously differentiable in a
neighborhood of \(X_\ell^\ast\) and that
\[
\left(
\sum_{a=1}^{|\mathcal V|}
\|\nabla^2\Psi_{\ell,a}(X)\|_{\mathrm{op}}^2
\right)^{1/2}
\le M_\ell
\]
in that neighborhood. For transformers with smooth activations and LayerNorm, this condition holds locally on neighborhoods where LayerNorm denominators are bounded away from zero and attention probabilities remain finite.
Let
\[
P:=I-\tfrac{1}{|\mathcal V|}\mathbf 1\mathbf 1^\top,
\qquad
\bar\Psi_\ell:=P\Psi_\ell,
\qquad
\bar J_\ell:=D\bar\Psi_\ell(X_\ell^\ast),
\]
and let
\[
p^\ast:=\operatorname{softmax}(\Psi_\ell(X_\ell^\ast)),
\qquad
F^\ast:=\operatorname{diag}(p^\ast)-p^\ast p^{\ast\top}.
\]
Then:

\begin{enumerate}
    \item \textnormal{\emph{Local Hessian decomposition.}}
    The Hessian of the layerwise loss-to-go satisfies
    \begin{equation}
    \label{eq:hessian_decomp}
    \nabla^2\mathcal J_\ell(X_\ell^\ast)
    =
    \bar J_\ell^\top F^\ast \bar J_\ell
    +
    R_\ell,
    \end{equation}
    where
    \begin{equation}
    \label{eq:remainder_bound}
    \|R_\ell\|_{\mathrm{op}}
    \le
    M_\ell\|p^\ast-q\|_2
    \le
    M_\ell\sqrt{2\varepsilon_\ast}.
    \end{equation}
    Thus, at low loss, the dominant curvature term is the pullback of the
    Fisher geometry of the output distribution.

    \item \textnormal{\emph{Pullback Fisher geometry.}}
    Define
    \begin{equation}
    \label{eq:Kl_def}
    K_\ell:=\bar J_\ell^\top F^\ast \bar J_\ell,
    \qquad
    g_\ell(\delta X,\delta X):=
    \langle \delta X,K_\ell\delta X\rangle .
    \end{equation}
    For \(X=X_\ell^\ast+\delta X\) sufficiently close to \(X_\ell^\ast\),
    \begin{equation}
    \label{eq:local_taylor}
    \mathcal J_\ell(X)
    =
    \mathcal J_\ell(X_\ell^\ast)
    +
    \langle\nabla\mathcal J_\ell(X_\ell^\ast),\delta X\rangle
    +
    \frac12 g_\ell(\delta X,\delta X)
    +
    O(\sqrt{\varepsilon_\ast}\|\delta X\|^2)
    +
    O(\|\delta X\|^3).
    \end{equation}
    Hence, near a low-loss point with small first-order term, the local
    low-loss set is approximated by an ellipsoidal tube.

    \item \textnormal{\emph{Prediction-null directions.}}
    If \(\bar J_\ell\) has constant rank \(r\) near \(X_\ell^\ast\), then
    \begin{equation}
    \mathcal N_\ell:=\ker\bar J_\ell
    \end{equation}
    consists of first-order prediction-null directions. 
\end{enumerate}
\end{theorem}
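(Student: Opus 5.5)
The plan is to reduce everything to the chain rule for a composition of a smooth map \(\Psi_\ell\) with the explicit function \(\phi\), and to use the shift-invariance of the softmax.

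First compute the derivatives of \(\phi\). Since \(\phi(z)=\sum_a q_a\log q_a - \sum_a q_a z_a + \log\sum_b e^{z_b}\), we get
\[
\nabla\phi(z)=\operatorname{softmax}(z)-q,\qquad \nabla^2\phi(z)=\operatorname{diag}(p)-pp^\top=:F(z).
\]
Both are invariant under \(z\mapsto z+c\mathbf 1\). Hence \(\phi(\Psi_\ell)=\phi(\bar\Psi_\ell)\), and we may replace \(\Psi_\ell\) by \(\bar\Psi_\ell=P\Psi_\ell\) throughout.

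The second-order chain rule then gives
\[
\nabla^2\mathcal J_\ell(X^\ast)=\bar J_\ell^\top F^\ast\bar J_\ell+\sum_a (p^\ast-q)_a\,\nabla^2\bar\Psi_{\ell,a}(X^\ast).
\]
We set \(R_\ell\) equal to the second term. Because \(\mathbf 1^\top(p^\ast-q)=0\), we have \(P(p^\ast-q)=p^\ast-q\), so
\[
\sum_a(p^\ast-q)_a\nabla^2\bar\Psi_{\ell,a}=\sum_a(p^\ast-q)_a\nabla^2\Psi_{\ell,a}.
\]
Note also that \(F^\ast\mathbf 1=0\), which is why \(J_\ell\) and \(\bar J_\ell\) can be used interchangeably in the Fisher term. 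Applying Cauchy--Schwarz over \(a\) together with the triangle inequality for the operator norm yields
\[
\|R_\ell\|_{\mathrm{op}}\le \|p^\ast-q\|_2\bigl(\sum_a\|\nabla^2\Psi_{\ell,a}\|_{\mathrm{op}}^2\bigr)^{1/2}\le M_\ell\|p^\ast-q\|_2.
\]
For the second inequality we use \(\|p^\ast-q\|_2\le\|p^\ast-q\|_1\le\sqrt{2D_{\mathrm{KL}}(q\|p^\ast)}=\sqrt{2\varepsilon_\ast}\), which is Pinsker's inequality. This proves part 1.

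For part 2, apply Taylor's theorem with remainder to \(\mathcal J_\ell\), which is \(C^2\) as a composition of \(C^2\) maps:
\[
\mathcal J_\ell(X^\ast+\delta X)=\mathcal J_\ell(X^\ast)+\langle\nabla\mathcal J_\ell,\delta X\rangle+\tfrac12\langle\delta X,\nabla^2\mathcal J_\ell(X^\ast)\delta X\rangle+o(\|\delta X\|^2).
\]
Substituting part 1 turns the quadratic term into \(\tfrac12 g_\ell+O(M_\ell\sqrt{\varepsilon_\ast}\|\delta X\|^2)\). The main obstacle is upgrading \(o(\|\delta X\|^2)\) to \(O(\|\delta X\|^3)\), because the hypothesis gives only \(C^2\) regularity of \(\Psi_\ell\). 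I would handle this in one of two ways. The first is to note that for transformers with smooth activations and LayerNorm away from zero, \(\Psi_\ell\) is in fact \(C^3\) (indeed real-analytic) on the stated neighborhood, so the third derivative of \(\mathcal J_\ell\) is bounded on a compact ball and the integral form of the remainder gives \(O(\|\delta X\|^3)\). The second is to interpret the last term as the modulus-of-continuity remainder. I would state the first explicitly as the regime intended. The ellipsoidal-tube remark then follows: when the gradient term is negligible, \(\{\mathcal J_\ell\le\varepsilon\}\) is locally \(\{\tfrac12 g_\ell(\delta X,\delta X)\lesssim\varepsilon-\varepsilon_\ast\}\), which is a degenerate ellipsoid that is elongated along \(\ker K_\ell\).

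For part 3, a \(\delta X\in\ker\bar J_\ell\) gives \(\bar\Psi_\ell(X^\ast+\delta X)=\bar\Psi_\ell(X^\ast)+O(\|\delta X\|^2)\). The centered logits, and hence \(p\), are therefore unchanged to first order, and \(g_\ell(\delta X,\delta X)=0\). By the constant rank theorem the level sets of \(\bar\Psi_\ell\) near \(X^\ast\) are submanifolds of codimension \(r\) whose tangent space at \(X^\ast\) is \(\mathcal N_\ell\). Since \(\ker\bar J_\ell\subseteq\ker K_\ell\), these directions are first-order prediction-null, which completes the proof.
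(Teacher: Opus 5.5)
Your proposal is correct and follows the paper's proof essentially step for step. Both arguments use shift-invariance of the softmax to pass to $\bar\Psi_\ell$, the second-order chain rule with the softmax gradient and Hessian identities, Cauchy--Schwarz plus Pinsker for $R_\ell$, a Taylor expansion whose $O(\|\delta X\|^3)$ remainder is justified by extra smoothness (the paper likewise writes ``provided the third derivative is bounded''), and the constant-rank theorem for the null directions. Your use of $\mathbf 1^\top(p^\ast-q)=0$ to rewrite $R_\ell$ in terms of $\nabla^2\Psi_{\ell,a}$ rather than $\nabla^2\bar\Psi_{\ell,a}$ is a genuine tightening. The paper bounds the centered Hessians by $M_\ell$ ``by assumption'', but the hypothesis concerns the uncentered ones, and the triangle inequality only gives $2M_\ell$ for the centered sum.
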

\begin{tcolorbox}[colback=gray!5!white, colframe=gray!40!black,
                  title={\small Informal summary of Theorem~\ref{th1}},
                  left=4pt, right=4pt, top=3pt, bottom=3pt]
\small
A successful hidden-state trajectory stays inside a ``tube'' of low-loss states.
This tube is thin in prediction-relevant directions and thick in null
directions; removing the latter leaves a smaller observable space whose local
geometry is described by \(g_\ell\).
\end{tcolorbox}

\begin{figure}[t]
    \centering
    \includegraphics[width=0.7\linewidth]{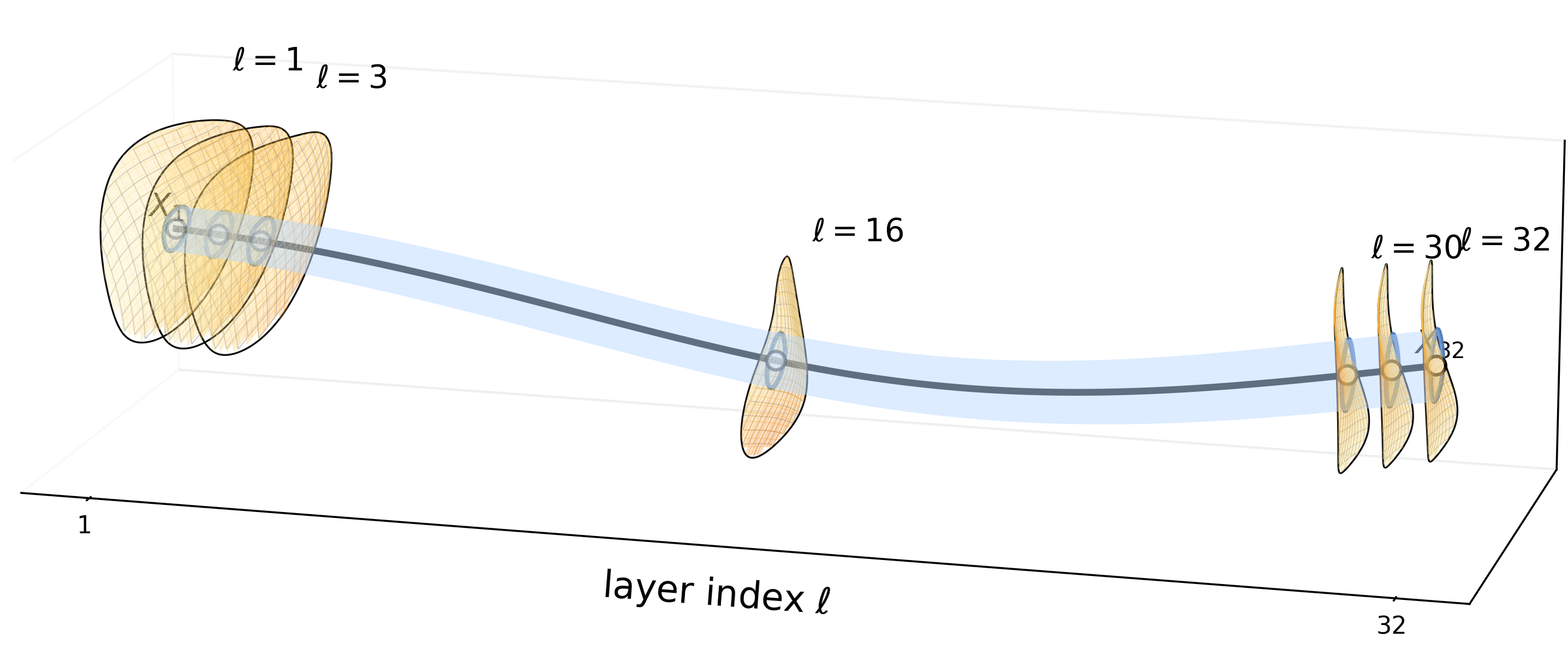}
    \vspace{0.5em}
    \begin{minipage}{0.4\linewidth}
        \centering
        \includegraphics[width=0.8\linewidth]{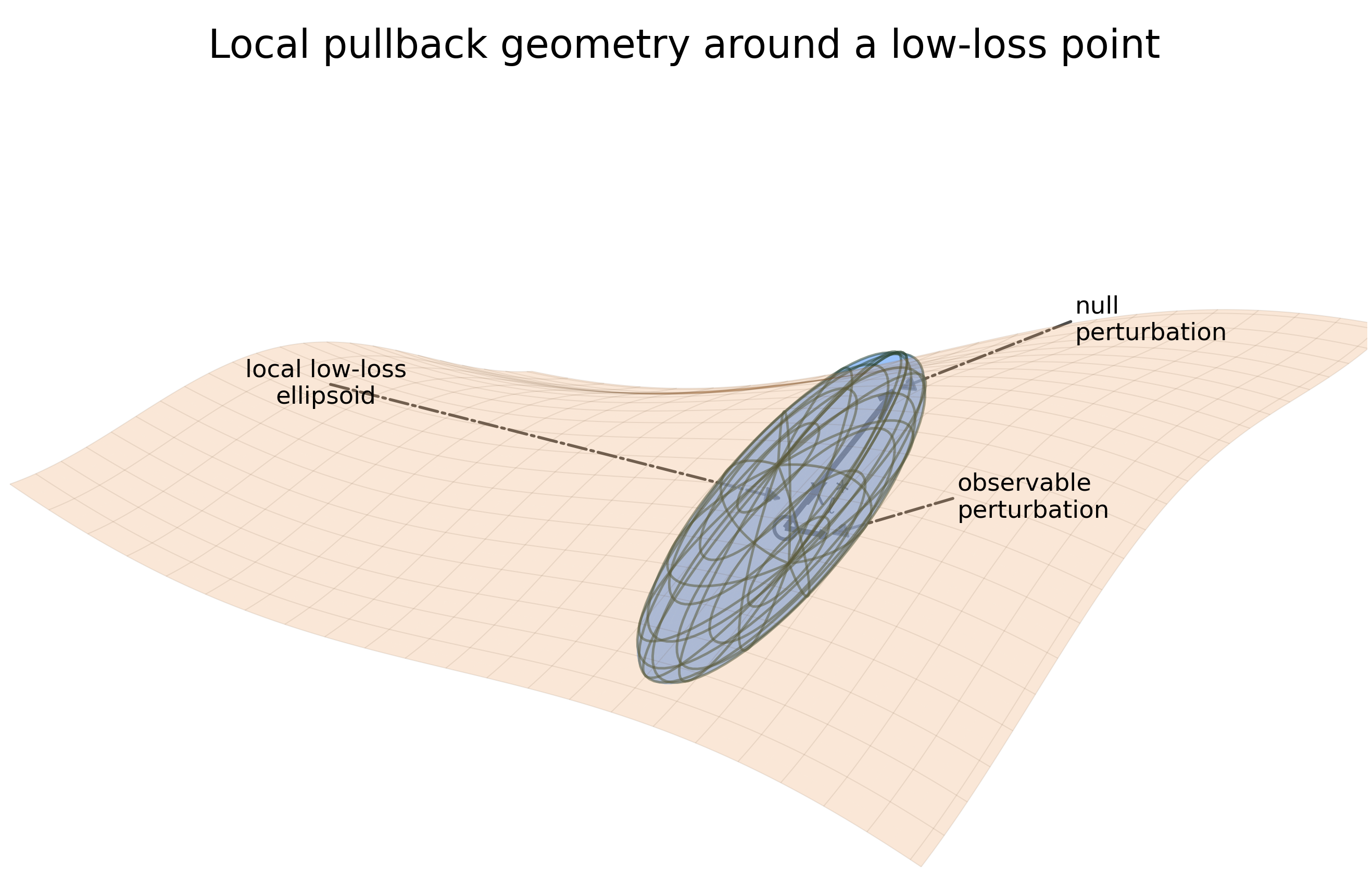}
    \end{minipage}
    \hfill
    \begin{minipage}{0.55\linewidth}
        \centering
        \includegraphics[width=0.85\linewidth]{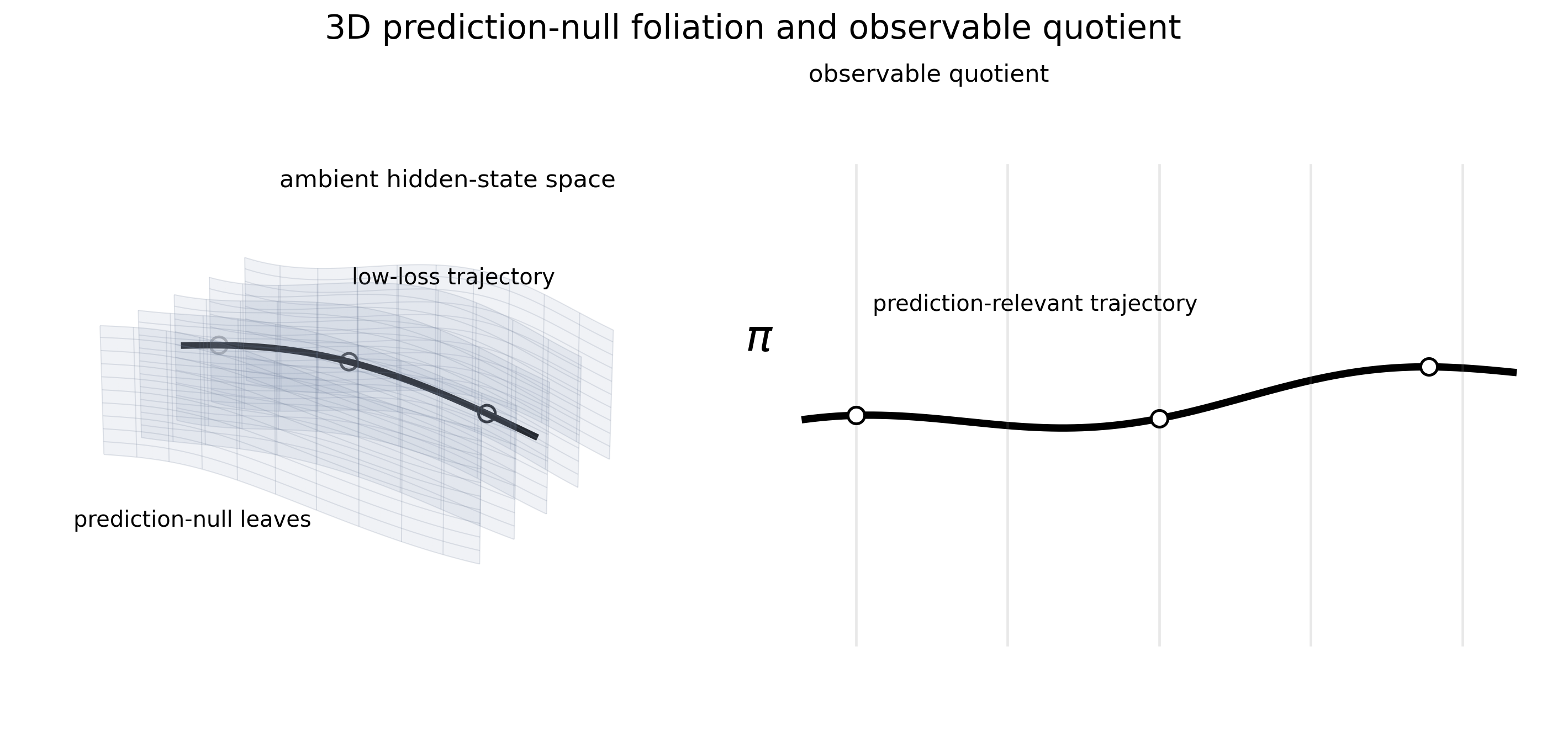}
    \end{minipage}
    \caption{\textbf{Geometry of low-loss hidden trajectories.}
    \textbf{Top:} hidden-state trajectory and layerwise low-loss sets
    \(\mathcal A_\varepsilon^\ell\).
    \textbf{Bottom left:} local pullback geometry near \(X_\ell^\ast\).
    \textbf{Bottom right:} prediction-null foliation and observable quotient.}
\end{figure}

\subsection{Tokenwise geometric saliency in transformers}

Let
\[
G_\ell:=DB_\ell(X_\ell^\ast),
\qquad
G_\ell^{(r\leftarrow s)}:\mathbb R^d\to\mathbb R^d
\]
denote the Jacobian of the \(\ell\)-th block in token blocks. The operator
\(K_\ell\) in \eqref{eq:Kl_def} is the local output-sensitive curvature
operator. For each token \(s\), let
\[
P_s:\mathcal X_t\to \mathbb R^d
\]
project onto the \(s\)-th token fiber, and define the geometric contribution
score
\begin{equation}
\kappa_{\ell,s}:=\operatorname{tr}(P_sK_\ell P_s^\top).
\end{equation}
Thus \(\kappa_{\ell,s}\) measures how much of the local output-sensitive
curvature is carried by token \(s\).

For each token pair \((s,r)\), define
\[
w_\ell^{(r\leftarrow s)}:=\|G_\ell^{(r\leftarrow s)}\|_F.
\]
These weights induce a causal directed token graph at layer \(\ell\). Let
\(\mathrm{Anc}_{\ell:L-1}(t)\) be the ancestor closure of the target token \(t\)
under the composed layerwise graphs.

\begin{proposition}[Tokenwise pullback saliency and causal support]
\label{pr1}
Let \(K_\ell=\bar J_\ell^\top F^\ast \bar J_\ell\) be the pullback Fisher
operator from Theorem~\ref{th1}. For a token position \(s\le t\), let
\(P_s:\mathcal X_t\to \mathbb R^d\) denote projection onto the \(s\)-th token
fiber, and define the downstream token-to-logit Jacobian
\[
H_{\ell,s}
:=
D_{X_{\ell,s}}\bar\Psi_\ell(X_\ell^\ast)
=
\bar J_\ell P_s^\top
\in \mathbb R^{|\mathcal V|\times d}.
\]
Define the tokenwise curvature score
\[
\kappa_{\ell,s}
:=
\operatorname{tr}(P_sK_\ell P_s^\top).
\]
Then
\begin{enumerate}
    \item \textnormal{\emph{Fisher-Jacobian identity.}}
    \begin{equation}
    \label{eq:kappa_identity}
    \kappa_{\ell,s}
    =
    \operatorname{tr}\!\left(H_{\ell,s}^\top F^\ast H_{\ell,s}\right)
    =
    \left\|F^{\ast 1/2}H_{\ell,s}\right\|_F^2 .
    \end{equation}
    Thus \(\kappa_{\ell,s}\) is the Fisher-weighted squared sensitivity of the
    target-position logits to perturbations of token \(s\) at layer \(\ell\).

    \item \textnormal{\emph{Causal support.}}
    Under the standard causal mask, if token \(s\) is not in the downstream
    ancestor set of target token \(t\) from layer \(\ell\) to \(L\), then
    \[
    H_{\ell,s}=0,
    \qquad
    \kappa_{\ell,s}=0.
    \]

    \item \textnormal{\emph{Pathwise coupling bound.}}
    Let \(G_m^{(r\leftarrow u)}\) denote the token-block Jacobian of block
    \(m\), mapping perturbations at token \(u\) to perturbations at token \(r\).
    Let \(\Pi_{\ell:s\to t}\) be the set of directed causal paths from
    \((\ell,s)\) to the target token \(t\) at layer \(L\). Then
    \begin{equation}
    \label{eq:kappa_path_bound}
    \kappa_{\ell,s}
    \le
    d\|F^\ast\|_{\mathrm{op}}
    \|D Z_t(X_L^\ast)\|_{\mathrm{op}}^2
    \left(
    \sum_{\pi\in\Pi_{\ell:s\to t}}
    \prod_{(m,u\to r)\in\pi}
    \|G_m^{(r\leftarrow u)}\|_{\mathrm{op}}
    \right)^2 .
    \end{equation}
\end{enumerate}
\end{proposition}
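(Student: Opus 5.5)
The proof handles the three parts in order. Parts (1) and (2) are essentially linear algebra and a reachability argument. Part (3) requires a chain-rule expansion of the downstream Jacobian.

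\textbf{Part (1).} I would start from the definition $\kappa_{\ell,s}=\operatorname{tr}(P_sK_\ell P_s^\top)$ and substitute $K_\ell=\bar J_\ell^\top F^\ast\bar J_\ell$. This gives
\[
\operatorname{tr}\bigl(P_s\bar J_\ell^\top F^\ast\bar J_\ell P_s^\top\bigr)=\operatorname{tr}\bigl(H_{\ell,s}^\top F^\ast H_{\ell,s}\bigr),
\]
where $H_{\ell,s}=\bar J_\ell P_s^\top$. The identification $H_{\ell,s}=D_{X_{\ell,s}}\bar\Psi_\ell(X_\ell^\ast)$ holds because $P_s^\top$ is the embedding of the $s$-th token fiber, so composing the full Jacobian with it gives the partial derivative in that fiber. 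Since $F^\ast=\operatorname{diag}(p^\ast)-p^\ast p^{\ast\top}$ is a covariance matrix, it is positive semidefinite and has a symmetric square root. Writing $H^\top F^\ast H=(F^{\ast1/2}H)^\top(F^{\ast1/2}H)$ then yields the Frobenius-norm form.

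\textbf{Part (2).} I would write $\bar\Psi_\ell=P\circ Z_t\circ B_{L-1}\circ\cdots\circ B_\ell$ and apply the chain rule in token blocks. This gives
\[
\bar J_\ell=P\,DZ_t(X_L^\ast)\,G_{L-1}\cdots G_\ell,
\]
with $G_m=DB_m(X_m^\ast)$. Since $Z_t$ depends only on the $t$-th token of $X_L$, $DZ_t(X_L^\ast)$ factors through $P_t$. Expanding the product of block matrices gives
\[
H_{\ell,s}=P\,DZ_t\,\sum_{u_{\ell+1},\dots,u_{L-1}} G_{L-1}^{(t\leftarrow u_{L-1})}\cdots G_\ell^{(u_{\ell+1}\leftarrow s)}.
\]
This is a sum over token sequences $s=u_\ell\to u_{\ell+1}\to\cdots\to u_L=t$. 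A term is nonzero only if every factor $G_m^{(u_{m+1}\leftarrow u_m)}$ is nonzero, i.e. only if each step is an edge of the layer-$m$ graph. Under the causal mask, $G_m^{(r\leftarrow u)}=0$ whenever $u>r$, because attention, the MLP and LayerNorm at token $r$ only read tokens $u\le r$. So every surviving term corresponds to a directed causal path from $(\ell,s)$ to $(L,t)$. If $s\notin\mathrm{Anc}_{\ell:L-1}(t)$, no such path exists, hence $H_{\ell,s}=0$ and $\kappa_{\ell,s}=0$ by part (1).

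\textbf{Part (3).} From part (1),
\[
\kappa_{\ell,s}=\|F^{\ast1/2}H_{\ell,s}\|_F^2\le d\,\|F^{\ast1/2}H_{\ell,s}\|_{\mathrm{op}}^2 .
\]
This uses $\|A\|_F^2\le\operatorname{rank}(A)\|A\|_{\mathrm{op}}^2$ and the fact that $H_{\ell,s}$ has $d$ columns. Next, $\|F^{\ast1/2}\|_{\mathrm{op}}^2=\|F^\ast\|_{\mathrm{op}}$ and $\|P\|_{\mathrm{op}}\le1$, since $P$ is an orthogonal projection. Applying the triangle inequality to the path expansion from part (2), and submultiplicativity within each path product, gives
\[
\|H_{\ell,s}\|_{\mathrm{op}}\le\|DZ_t(X_L^\ast)\|_{\mathrm{op}}\sum_{\pi\in\Pi_{\ell:s\to t}}\prod_{(m,u\to r)\in\pi}\|G_m^{(r\leftarrow u)}\|_{\mathrm{op}} .
\]
Squaring this and combining with the previous display gives \eqref{eq:kappa_path_bound}.

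The main obstacle is bookkeeping rather than analysis. One must set up the token-block decomposition of $G_m$ and the restriction of $DZ_t$ to the $t$-th fiber carefully, so that the nested sum is exactly a sum over paths in $\Pi_{\ell:s\to t}$. The path convention must include self-loops $u\to u$, which come from the residual connection. One must also justify that causally masked blocks vanish exactly, and that this holds for the Jacobian evaluated along the actual trajectory $X_m^\ast=\Phi_{\ell\to m}(X_\ell^\ast)$, not just for the function itself.
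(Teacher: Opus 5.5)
Your proposal is correct and follows essentially the same route as the paper: the trace substitution with $F^\ast\succeq 0$ for part (1), then the chain-rule expansion of the downstream Jacobian as a sum over causal token paths (explicit in the paper for part (3), only informal for part (2)), bounded by the triangle inequality and submultiplicativity. Your bookkeeping is slightly more careful than the paper's. You keep the centering projection via $\|P\|_{\mathrm{op}}\le 1$ and derive the factor $d$ explicitly from $\|A\|_F^2\le\operatorname{rank}(A)\|A\|_{\mathrm{op}}^2$, whereas the paper's final display drops $d$ and only mentions absorbing the hidden dimension into the constant.
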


\begin{tcolorbox}[colback=gray!5!white, colframe=gray!40!black,
                  title={\small Informal summary of Proposition~\ref{pr1}},
                  left=4pt, right=4pt, top=3pt, bottom=3pt]
\small
A token contributes to the output geometry only if it causally influences the
target. Even among such ancestors, its contribution is small when its Jacobian
coupling to the target is weak. Geometric importance is therefore both sparse
and graded.
\end{tcolorbox}

\subsection{Geometry-guided reduction}

The spectrum of \(K_\ell\) identifies prediction-relevant directions, while
Proposition~\ref{pr1} shows that this geometry is often concentrated on a small
subset of ancestor tokens. This suggests projecting onto the dominant eigenspace
of \(K_\ell\) and pruning tokens with small \(\kappa_{\ell,s}\).

\begin{proposition}[Geometry-guided reduction with local loss guarantee]
\label{pr2}
Let \(K_\ell\) be the pullback Fisher operator from Theorem~\ref{th1}, with
eigenvalues
\[
\lambda_1(K_\ell)\ge \lambda_2(K_\ell)\ge \cdots \ge 0.
\]
Let \(\Pi_\ell^{(k)}\) be the orthogonal projector onto the span of the top
\(k\) eigenvectors of \(K_\ell\). For a local hidden-state perturbation
\[
X=X_\ell^\ast+\delta X,
\]
define its geometry-guided reduction by
\[
\widetilde X
:=
X_\ell^\ast+\Pi_\ell^{(k)}\delta X.
\]
Assume that \(X\) and \(\widetilde X\) lie in the neighborhood where
Theorem~\ref{th1} applies. Then the discarded component
\[
\delta X_{\mathrm{tail}}
:=
(I-\Pi_\ell^{(k)})\delta X
\]
has local quadratic loss contribution bounded by
\begin{equation}
\label{eq:geometry_reduction_tail_bound}
\frac12
\left\langle
\delta X_{\mathrm{tail}},
K_\ell
\delta X_{\mathrm{tail}}
\right\rangle
\le
\frac12
\lambda_{k+1}(K_\ell)
\|\delta X_{\mathrm{tail}}\|^2 .
\end{equation}

\end{proposition}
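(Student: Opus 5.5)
The bound follows from the spectral theorem applied to \(K_\ell\), together with the observation that \(\delta X_{\mathrm{tail}}\) lies in the span of the trailing eigenvectors. By the definition in Theorem~\ref{th1}, \(K_\ell=\bar J_\ell^\top F^\ast \bar J_\ell\). This matrix is symmetric because \(F^\ast=\operatorname{diag}(p^\ast)-p^\ast p^{\ast\top}\) is symmetric. It is also positive semidefinite, since \(F^\ast\) is the covariance of a one-hot draw from \(p^\ast\): for any \(v\) we have \(v^\top F^\ast v=\sum_a p^\ast_a v_a^2-(\sum_a p^\ast_a v_a)^2\ge 0\). Hence \(\langle \delta X, K_\ell\delta X\rangle=\langle \bar J_\ell\delta X, F^\ast \bar J_\ell\delta X\rangle\ge0\). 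This justifies the ordering \(\lambda_1(K_\ell)\ge\lambda_2(K_\ell)\ge\cdots\ge0\), and it gives an orthonormal eigenbasis \(u_1,u_2,\dots\) of \(\mathcal X_t\) (with the Frobenius inner product) satisfying \(K_\ell u_i=\lambda_i u_i\).

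Next I would write \(\Pi_\ell^{(k)}=\sum_{i\le k}u_iu_i^\top\). Then \(I-\Pi_\ell^{(k)}=\sum_{i>k}u_iu_i^\top\), so \(\delta X_{\mathrm{tail}}=\sum_{i>k}c_iu_i\) with \(c_i=\langle u_i,\delta X\rangle\). Since \(K_\ell\) preserves this trailing eigenspace and the \(u_i\) are orthonormal,
\[
\langle \delta X_{\mathrm{tail}},K_\ell\delta X_{\mathrm{tail}}\rangle=\sum_{i>k}\lambda_i c_i^2\le \lambda_{k+1}(K_\ell)\sum_{i>k}c_i^2=\lambda_{k+1}(K_\ell)\|\delta X_{\mathrm{tail}}\|^2 .
\]
The last equality is Parseval. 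Multiplying by \(\tfrac12\) gives \eqref{eq:geometry_reduction_tail_bound}. Equivalently, this is the Courant--Fischer characterization: \(\lambda_{k+1}\) is the maximum of the Rayleigh quotient over the orthogonal complement of the top-\(k\) eigenspace.

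The only point needing care is the case of repeated eigenvalues, where the top-\(k\) eigenspace may not be unique (for instance if \(\lambda_k=\lambda_{k+1}\)). I would handle this by fixing one orthonormal eigenbasis and defining \(\Pi_\ell^{(k)}\) from it. The argument above never uses uniqueness, so the bound holds for any such choice. If \(k\) is at least the ambient dimension, the tail is zero and the statement is trivial.

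I would also add a remark connecting this to the full loss. The neighborhood assumption means the inequality controls the quadratic term \(\tfrac12 g_\ell\) in \eqref{eq:local_taylor} for the tail component. The remaining errors, namely the \(O(\sqrt{\varepsilon_\ast}\|\delta X\|^2)\) remainder, the cubic term, and the first-order term, are the ones bounded by Theorem~\ref{th1}. The proposition itself, however, only asserts the quadratic-form bound, so no Taylor remainder estimates are required.
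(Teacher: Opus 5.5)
Your proof is correct and matches the paper's argument: expand $\delta X$ in an orthonormal eigenbasis of $K_\ell$, observe that $\delta X_{\mathrm{tail}}=\sum_{i>k}a_iu_i$, and bound $\sum_{i>k}\lambda_i a_i^2\le\lambda_{k+1}\sum_{i>k}a_i^2=\lambda_{k+1}\|\delta X_{\mathrm{tail}}\|^2$. You also check that $K_\ell$ is symmetric positive semidefinite (via $F^\ast$ being a covariance) and that ties $\lambda_k=\lambda_{k+1}$ do not affect the bound; the paper leaves both implicit, and your closing remark on the Taylor remainders parallels the paper's follow-up expansion of $\mathcal J_\ell(\widetilde X)$.
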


\section{Empirical Validation and Geometry-Guided Reduction}
\label{sec:experiments}

The theory yields two kinds of objects. The first are \emph{descriptive}
observables of the hidden trajectory, such as the operator \(K_\ell\) and the
tokenwise score \(\kappa_{\ell,s}\), which can be measured directly on trained
models and used to test the predictions of Theorem~\ref{th1} and
Proposition~\ref{pr1}. The second are \emph{constructive} objects, such as
layerwise observable subspaces and tokenwise saliency rankings, which can be used to design reduced architectures and geometry-aware compression schemes.

This section evaluates the theory at two levels. First, we test whether the
pullback-Fisher geometry is a faithful local description of hidden-state
sensitivity: observable directions should be substantially more output-sensitive
than prediction-null directions. Second, we test whether the same geometry is
useful as an algorithmic signal for compression and recovery. This section evaluates the theory in five linked steps. 
\textbf{Experiment~\ref{sec:exp0-local-taylor}} directly validates the local Taylor geometry induced by the pullback Fisher operator. 
\textbf{Experiment~\ref{sec:exp1}} tests the observable/null anisotropy and effective observable dimension predicted by the low-loss tube picture. 
\textbf{Experiment~\ref{sec:exp2}} uses the spectrum of \(K_\ell\) for matched-budget rank allocation. 
\textbf{Experiment~\ref{sec:exp3}} uses the token-fiber trace \(\kappa_{\ell,s}\) for structured token-update pruning. \textbf{Experiment~\ref{sec:pg-distillation}} uses the same geometry as a hidden-state recovery objective for low-rank student distillation.

Unless stated otherwise, all measurements are computed on held-out validation sequences from \textsc{WikiText}~\citep{merity2016pointer}, \textsc{FineWeb}~\citep{penedo2024fineweb}, and \textsc{OpenWebText}~\citep{gokaslan2019openwebtext}.

\subsection{Experiment 0: Direct validation of local predictive geometry}
\label{sec:exp0-local-taylor}

Before using the pullback Fisher operator for reduction or recovery, we first test whether it captures the local predictive geometry predicted by Theorem~\ref{th1}. Specifically, we ask whether small perturbations of an intermediate hidden state produce terminal loss changes predicted by the local Taylor model induced by \(K_\ell\). For supervised cross-entropy, we compare a linear predictor, a Fisher quadratic predictor, and their sum:
\(\Delta \mathrm{CE}_{\mathrm{lin}}=\langle \nabla_{X_\ell}\mathrm{CE},\delta X_\ell\rangle\),
\(\Delta \mathrm{CE}_{\mathrm{quad}}=\frac{1}{2}\delta X_\ell^\top K_\ell\delta X_\ell\), and
\(\Delta \mathrm{CE}_{\mathrm{lin+quad}}=\Delta \mathrm{CE}_{\mathrm{lin}}+\Delta \mathrm{CE}_{\mathrm{quad}}\).
This separation is necessary because supervised one-hot CE is generally not stationary at the reference prediction, so the linear term need not vanish. To isolate the Fisher curvature term, we also evaluate teacher-KL with \(q=p^\star\), the model's unperturbed prediction. In this stationary setting, the leading local term is
\(\Delta \mathrm{KL}(p^\star\|p_\delta)\approx \frac{1}{2}\delta X_\ell^\top K_\ell\delta X_\ell\).

Figure~\ref{fig:exp1-r2-by-radius} shows that supervised CE is first-order dominated: the linear-only and linear-plus-quadratic predictors nearly coincide, while the quadratic-only predictor is substantially weaker. In contrast, the teacher-KL quadratic predictor becomes strongly predictive at moderate radii, where the perturbation-induced KL signal is measurable. Thus, Experiment~0 validates \(K_\ell\) as a local output-curvature or observability operator. The following experiments then use this geometry constructively: Experiment~1 studies observable and prediction-null directions, Experiment~2 uses the spectrum for rank allocation, Experiment~3 uses token-fiber traces for pruning, and Experiment~4 uses the quadratic form as a hidden-state recovery objective.

\subsection{Experiment 1: anisotropy and effective observable dimension}
\label{sec:exp1}

Guided by Theorem~\ref{th1}, we quantify how strongly each layer reacts to
perturbations in observable versus prediction-null directions. For every model,
layer \(\ell\), and target position \(t\), we construct the local curvature
operator \(K_\ell\), compute its top eigendirections, and use them to define an
observable subspace together with its orthogonal complement. We then sample
equal-norm perturbations in each subspace, apply them to \(X_\ell(s)\), and
measure the resulting change in target-position KL divergence between the
original and perturbed outputs.

To summarize the spectrum of \(K_\ell\), we report the effective observable
rank via the participation ratio
\begin{equation}
r_{\mathrm{eff}}(K_\ell)
:=
\frac{\bigl(\sum_i \lambda_i(K_\ell)\bigr)^2}
{\sum_i \lambda_i(K_\ell)^2}.
\end{equation}
This quantity serves as an empirical proxy for the low-dimensional
prediction-relevant structure suggested by Theorem~\ref{th1}. Figure~\ref{fig:exp1_geometry} shows a clear and consistent anisotropy across
all models: observable perturbations are orders of magnitude more harmful than
null perturbations at almost every layer, and the gap typically widens toward
the later blocks. This indicates that the residual geometry becomes
increasingly aligned with the final readout as the computation approaches the
output layer. The right panel further shows that the effective observable rank
grows gradually with depth, especially in the middle and later layers, but
remains far below the ambient hidden dimension. Taken together, these trends
support the view that decoder-only transformers concentrate most of their
prediction-relevant information in a thin, highly anisotropic subset of hidden
space, while a large fraction of directions remain nearly KL-neutral.

\begin{figure}[t]
    \centering
    \begin{minipage}{0.49\linewidth}
        \centering
        \includegraphics[width=.8\linewidth]{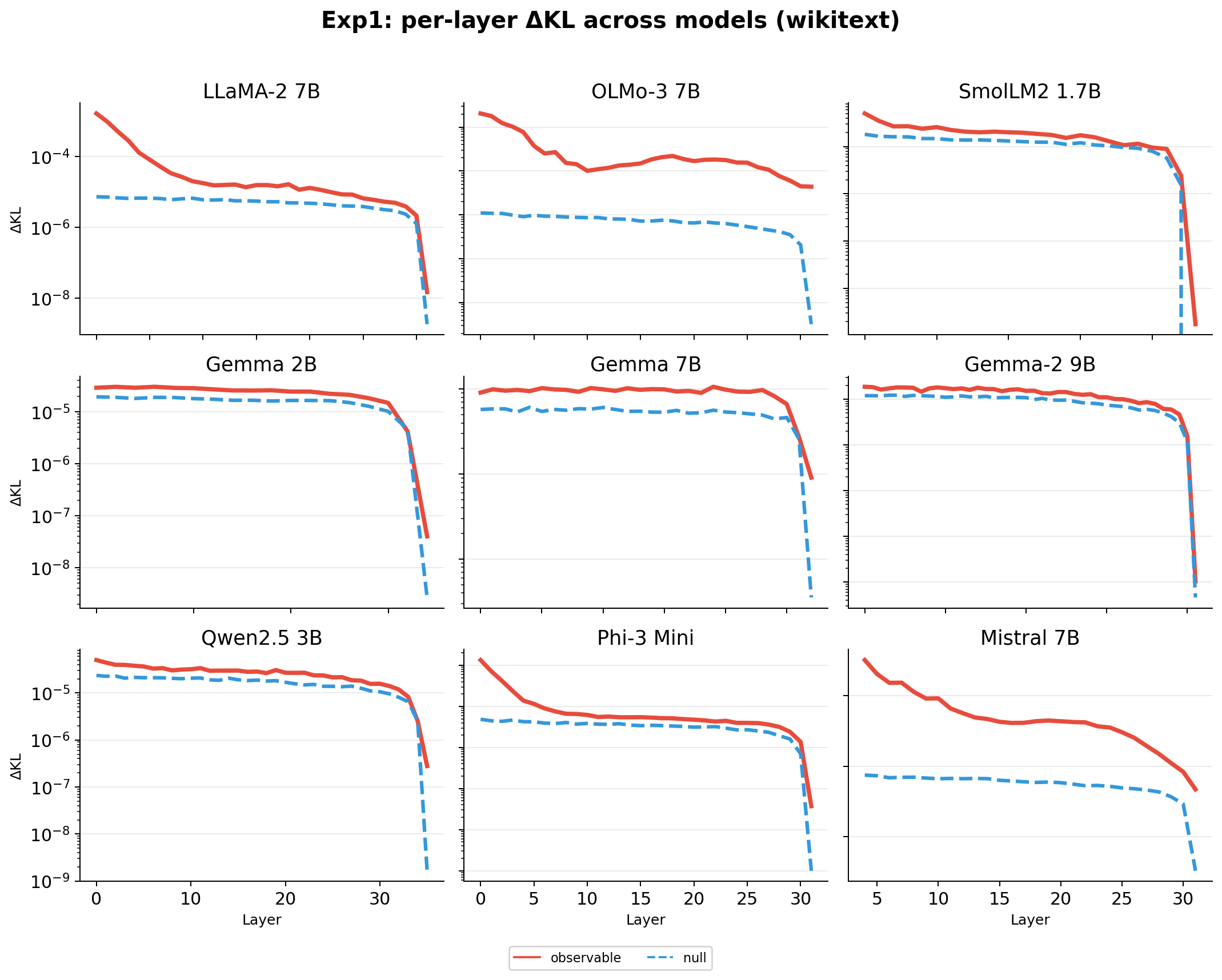}
    \end{minipage}
    \hfill
    \begin{minipage}{0.49\linewidth}
        \centering
        \includegraphics[width=.8\linewidth]{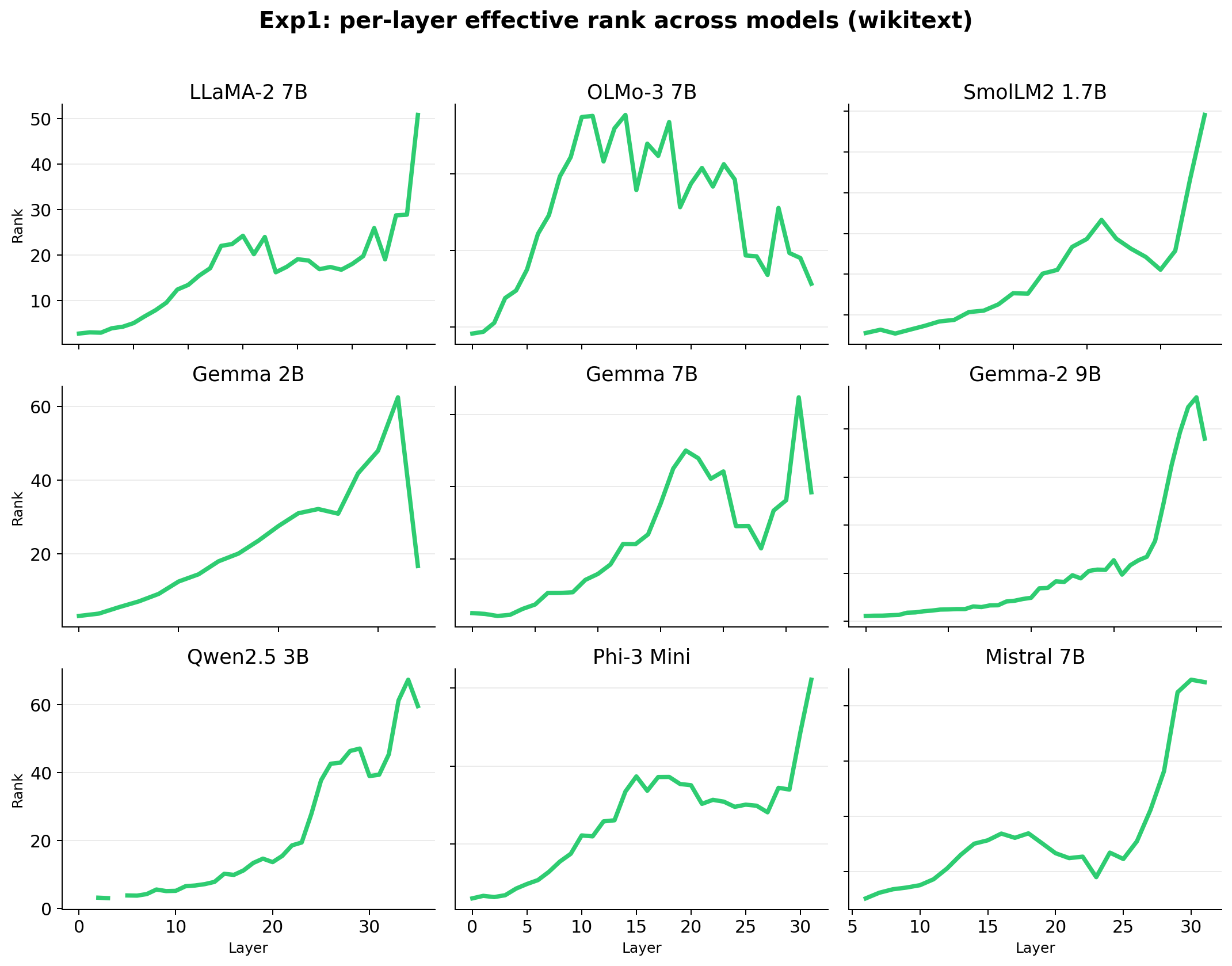}
    \end{minipage}
    \caption{\textbf{Experiment 1: anisotropy and effective observable dimension across layers.}
    \textbf{Left:} KL sensitivity of equal-norm observable and null perturbations across layers.
    \textbf{Right:} Effective observable rank across layers for multiple models.}
    \label{fig:exp1_geometry}
\end{figure}

\subsection{Experiment 2: geometry-guided rank allocation}
\label{sec:exp2}

Experiment~2 asks whether the layerwise geometry can be used to allocate low
rank more effectively than uniform compression. Given a global compression
budget, our geometry-guided method assigns a data-dependent rank \(k_\ell\) to
each layer using the spectrum of \(K_\ell\). Concretely, for a threshold
\(\tau\in\{0.01,0.05,0.1,0.2\}\), we choose the smallest \(k_\ell\) such that
\begin{equation}
\sum_{i>k_\ell}\lambda_i(K_\ell)
\;\le\;
\tau \sum_i \lambda_i(K_\ell).
\end{equation}
Intuitively, \(\tau\) controls how much spectral tail mass is discarded. As controlled matched-budget baselines, we use uniform SVD and uniform LoRA-style low-rank allocations~\citep{hu2022lora,hayou2024loraplus}, assigning the same rank to every layer and matching the overall compression ratio.

For each model, dataset, and threshold \(\tau\), we report the perplexity
increase
\begin{equation}
\Delta\mathrm{PPL}
=
\mathrm{PPL}_{\mathrm{compressed}}
-
\mathrm{PPL}_{\mathrm{base}}
\end{equation}
and the corresponding advantage over the best matched uniform baseline,
\begin{equation}
\Delta\mathrm{PPL}_{\mathrm{adv}}
:=
\Delta\mathrm{PPL}_{\mathrm{uniform}}
-
\Delta\mathrm{PPL}_{\mathrm{geom}}.
\end{equation}
Larger \(\Delta\mathrm{PPL}_{\mathrm{adv}}\) therefore indicates a stronger
benefit from geometry-guided allocation. We evaluate ten decoder-only LLMs
between roughly \(1\)B and \(9\)B parameters (see Table\,\ref{tab:model_checkpoints}). 

Figure~\ref{fig:exp2_advantage_wikitext_all_tau} shows that geometry-guided rank
allocation is most helpful at moderate thresholds (\(\tau=0.05\) and
\(\tau=0.1\)), where several small and medium models, including Gemma 2B,
Qwen2.5 3B, SmolLM2 1.7B, and OLMo-3 7B, benefit from a nonuniform rank
distribution. This suggests that these architectures contain a mix of strongly
prediction-relevant and comparatively redundant layers. For larger and more
homogeneous model families, especially some LLaMA and Gemma variants, the
advantage is smaller, indicating that uniform low-rank compression is already
close to optimal when layerwise spectra look similar. 

\begin{figure}[t]
  \centering
  \begin{subfigure}{0.48\linewidth}
    \centering
    \includegraphics[width=0.8\linewidth]{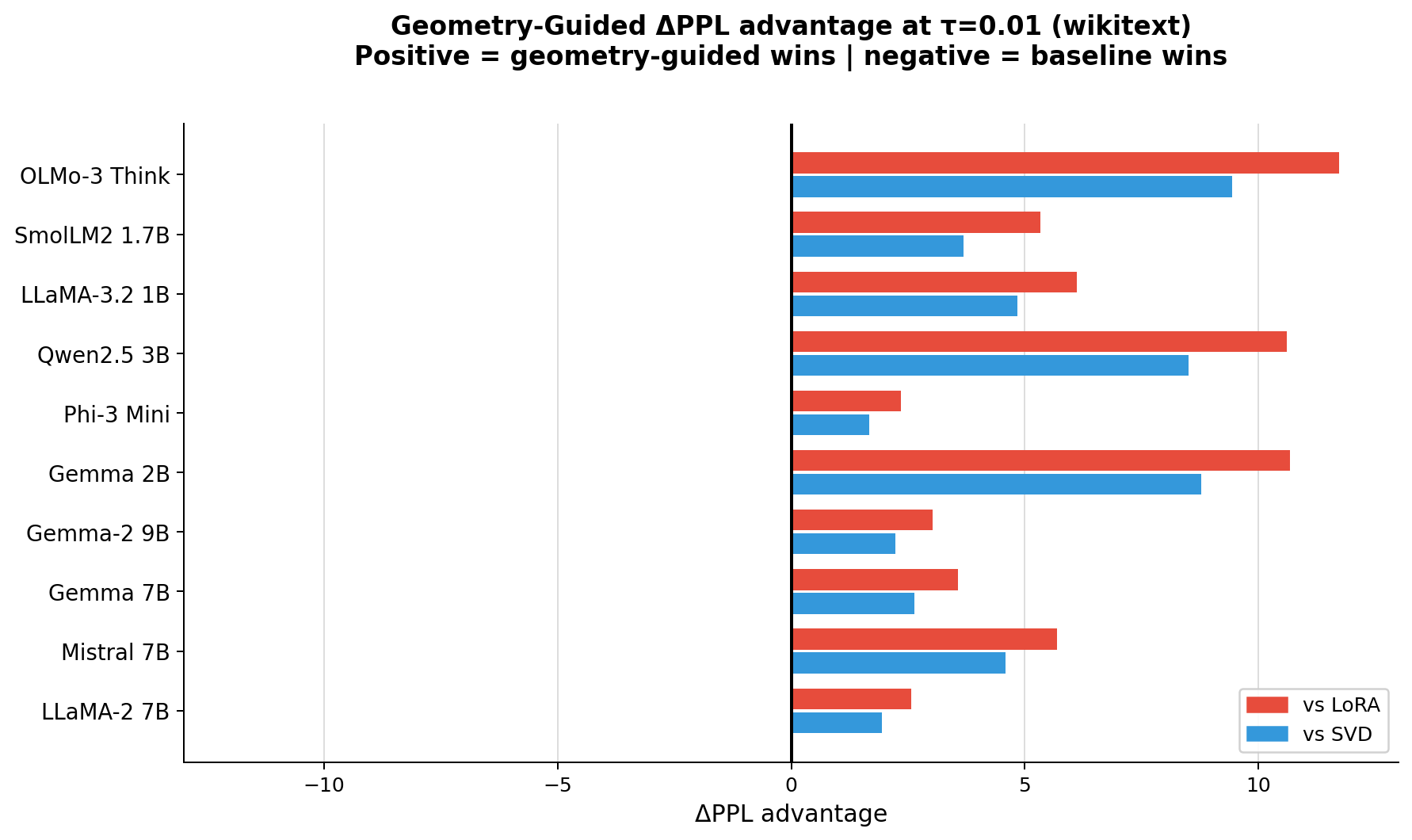}
    \label{fig:exp2_tau001}
  \end{subfigure}\hfill
  \begin{subfigure}{0.48\linewidth}
    \centering
    \includegraphics[width=0.8\linewidth]{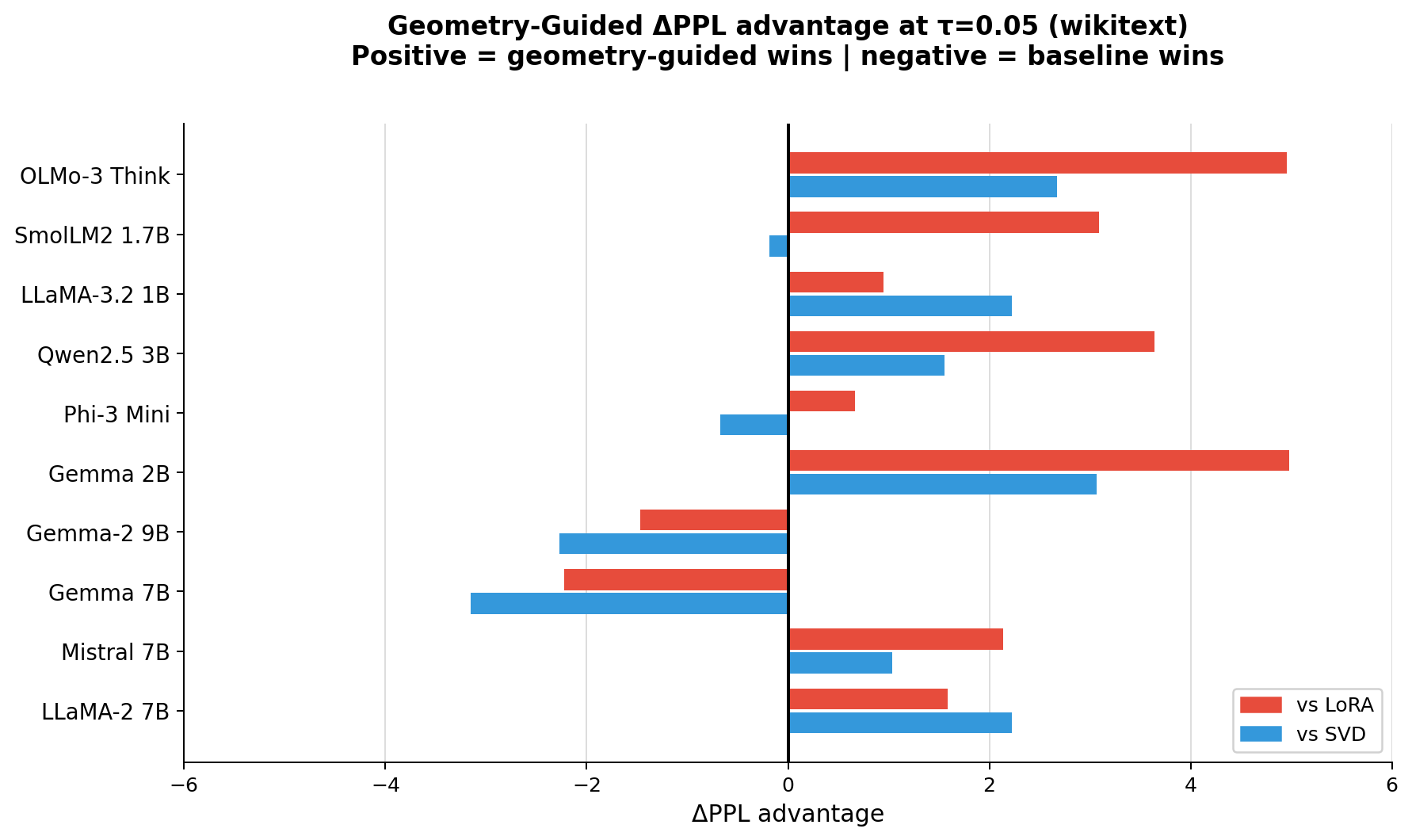}
    \label{fig:exp2_tau005}
  \end{subfigure}

  \vspace{0.5em}

  \begin{subfigure}{0.48\linewidth}
    \centering
    \includegraphics[width=0.8\linewidth]{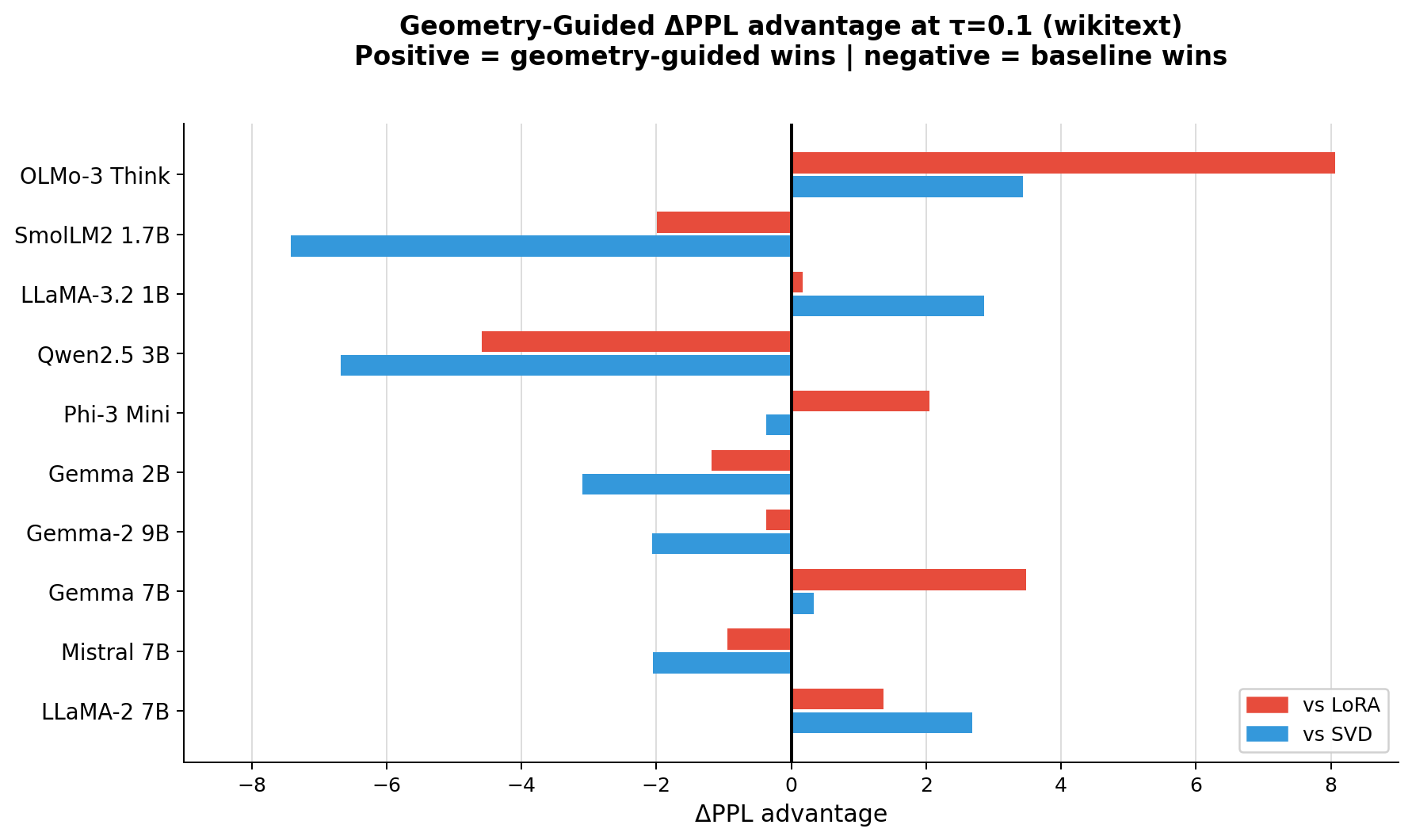}
    \label{fig:exp2_tau01}
  \end{subfigure}\hfill
  \begin{subfigure}{0.48\linewidth}
    \centering
    \includegraphics[width=0.8\linewidth]{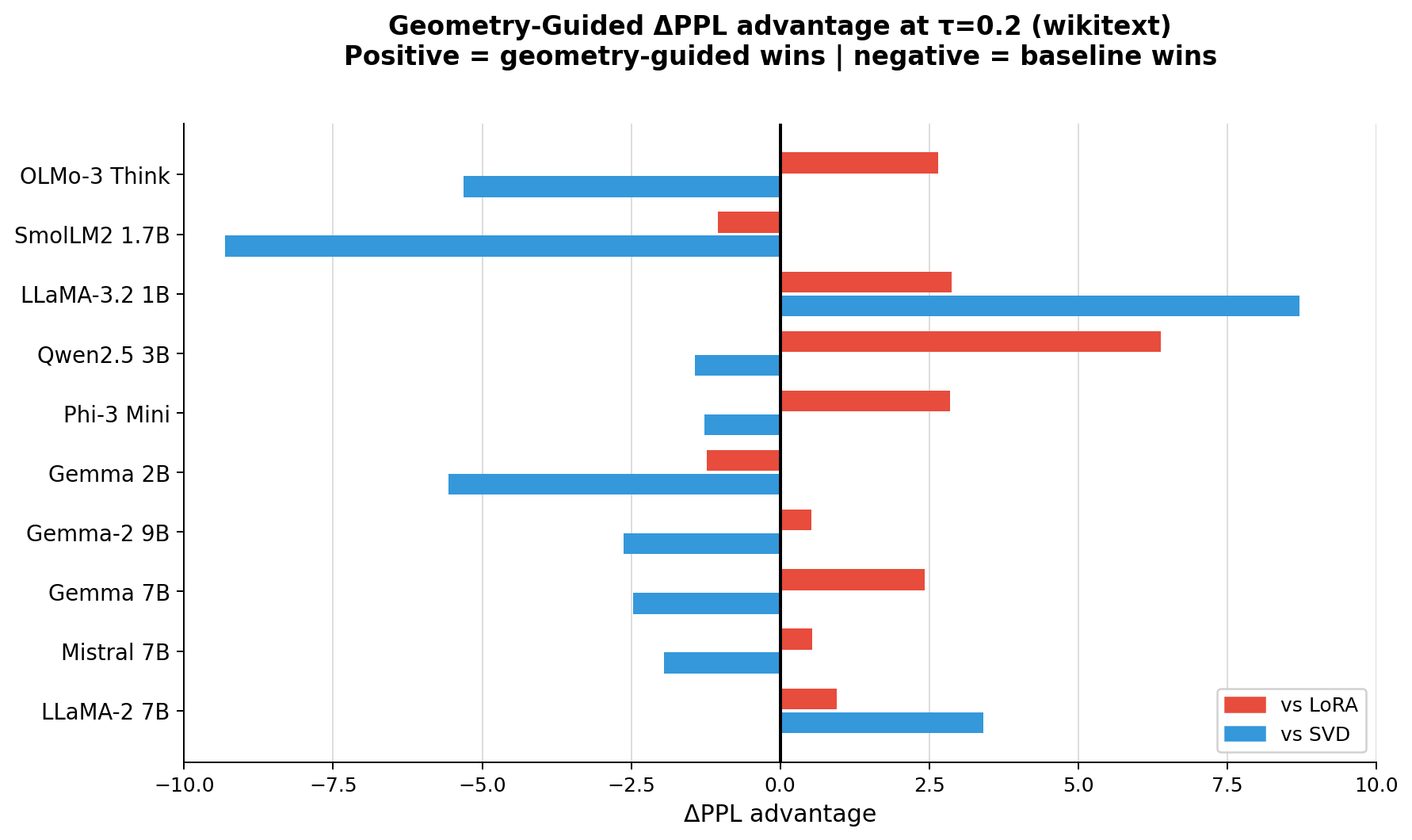}
    \label{fig:exp2_tau02}
  \end{subfigure}

  \caption{\textbf{Experiment 2: geometry-guided \(\Delta\mathrm{PPL}\) advantage on \textsc{WikiText} for different spectral thresholds \(\tau\).} Each panel reports $\Delta\mathrm{PPL}_{\mathrm{adv}} = \Delta\mathrm{PPL}_{\mathrm{uniform}} -
\Delta\mathrm{PPL}_{\mathrm{geom}}$; larger values indicate that geometry-guided rank allocation degrades perplexity less than the best matched uniform baseline.}
  \label{fig:exp2_advantage_wikitext_all_tau}
\end{figure}

\subsection{Experiment 3: geometry-guided pruning}
\label{sec:exp3}

Experiment~3 studies whether the same geometric signals can guide structured
pruning of layer updates. Starting from a compressed model, we rank tokenwise
updates within each layer according to four policies:
(i) the geometric saliency score \(\kappa_{\ell,s}\),
(ii) an attention-score baseline,
(iii) a gradient-activation saliency baseline $
\|\nabla_{X_{\ell,s}}\mathcal L \odot X_{\ell,s}\|_1,
$
and
(iv) a random baseline.
For a drop fraction \(\rho\in\{0.05,0.10,0.20,0.30\}\), we prune the least
important fraction of token-indexed updates under each policy and measure the
resulting validation perplexity increase
\begin{equation}
\Delta\mathrm{PPL}
=
\mathrm{PPL}_{\mathrm{pruned}}
-
\mathrm{PPL}_{\mathrm{unpruned}},
\end{equation}
where lower is better.
We run this experiment on the same family of decoder-only transformers as in
Experiment~2 across \textsc{WikiText}, \textsc{OpenWebText}, and
\textsc{FineWeb}. Figure\,\ref{fig:exp3_pruning_wikitext_all_drop} shows that both structured criteria, \(\kappa_{\ell,s}\) and attention score, substantially outperform random pruning across drop fractions. The \(\kappa\)-based policy is competitive with attention-based pruning and is slightly better in several model/drop-fraction settings. This suggests that the geometric score captures a loss-aware notion of token importance while attention remains a strong baseline for the pruning intervention studied here.

\subsection{Experiment 4: predictive-geometry distillation of low-rank students}
\label{sec:pg-distillation}

The preceding experiments test predictive geometry as a static signal: it
identifies sensitive perturbation directions, allocates low-rank capacity across
layers, and ranks tokenwise updates for pruning. We now ask a stronger question:
can the same geometry improve \emph{training} after compression? This turns the
pullback-Fisher object from a diagnostic into a recovery objective for compressed
students.

Given a teacher $T$, we construct a compressed student $S$ by replacing selected
attention and MLP projections with rank-$r$ low-rank factors,
\begin{equation}
    W \approx AB,
    \qquad
    A \in \mathbb{R}^{d_{\mathrm{out}}\times r},
    \qquad
    B \in \mathbb{R}^{r\times d_{\mathrm{in}}}.
\end{equation}
The student is trained with an output-level KD objective~\cite{hinton2015distilling} and, optionally, a
predictive-geometry regularizer on hidden states. The geometry term penalizes
student errors according to their teacher-predictive effect rather than their
Euclidean size: errors in output-sensitive directions are penalized more heavily,
while errors in approximately prediction-null directions are allowed to be larger.

We separate the role of compression from the role of distillation by comparing
two initializers, SVD and activation-aware SVD, and three KD objectives: forward
KL~\cite{hinton2015distilling}, reverse KL~\cite{gu2024minillm}, and skew KL~\cite{ko2024distillm}. For each KD objective, we evaluate both the base
variant and its geometry-regularized counterpart. The main paired comparisons are
therefore RKL versus RKL+Geo and SkewKL versus SkewKL+Geo at matched model,
rank, seed, and initializer. Full implementation details are given in
Appendix~\ref{app:pg-distillation}.

\begin{table}[!h]
\centering
\caption{Geometry advantage over matched KD baselines. $\Delta_{\mathrm{Geo}}=\mathrm{PPL}(\mathrm{KD})-\mathrm{PPL}(\mathrm{KD+Geo})$; higher is better.}
\label{tab:kd-geom-adv-summary}
\begin{tabular}{lrrrrrr}
\toprule
KD baseline & Pairs & Wins & Mean $\Delta_{\mathrm{Geo}}\uparrow$ & Median $\Delta_{\mathrm{Geo}}\uparrow$ & SVD mean & ASVD mean \\
\midrule
FKL & 14 & 8 & -38.4 & -12.6 & -35.0 & -41.7 \\
RKL & 14 & 9 & 21.3 & 12.3 & 44.5 & -1.9 \\
SkewKL & 14 & 10 & 59.3 & 8.6 & 66.7 & 51.9 \\
\bottomrule
\end{tabular}
\end{table}

The geometry term is most useful when added to stronger autoregressive KD objectives.
While adding geometry to forward KL does not improve average PPL, it improves reverse KL in 9/14 matched settings and skew KL in 10/14 matched settings.
The largest average gain is obtained for skew KL, suggesting that predictive geometry is complementary to mode-seeking or skewed KD objectives rather than simply replacing output-level distillation.

\section{Conclusion}

We introduced a local Fisher-geometric view of hidden trajectories in decoder-only transformers. Starting from layerwise loss-to-go sets, we showed that, near successful validation trajectories and in the low-loss regime, the terminal prediction loss induces a pullback Fisher operator \(K_\ell\) on intermediate hidden states. This operator separates output-sensitive directions from approximately prediction-null directions, yielding a local observable structure inside the residual stream.

For causal transformers, the same geometry induces a tokenwise curvature score \(\kappa_{\ell,s}\), which measures the contribution of each token position to local output-sensitive curvature and vanishes outside the causal ancestor set of the target. Together, the spectrum of \(K_\ell\) and its tokenwise traces provide loss-aware signals for where computation is concentrated, which directions should be preserved, and which components can be compressed with limited predictive damage.

Empirically, the proposed geometry gives a coherent and useful picture across multiple reduction settings. It predicts perturbation sensitivity, supports matched-budget layerwise rank allocation, yields competitive structured pruning signals, and provides an effective regularizer for low-rank student recovery when combined with stronger autoregressive distillation objectives. These results suggest that predictive geometry is not merely a descriptive diagnostic, but a practical mechanism for identifying and preserving the parts of the residual stream that matter for the model's predictions.

Overall, our findings support the view that transformer computation near successful trajectories is organized around a thin, anisotropic set of prediction-relevant directions induced by the terminal loss. Measuring this
geometry turns the final prediction objective into actionable local information inside the network, opening a path toward geometry-aware compression, distillation, and efficient inference methods that are directly tied to the model's predictive behavior.
\newpage

\bibliographystyle{abbrv}
\bibliography{references}

\newpage
\appendix

\section{Proofs}
\label{ap:theory}
\label{app:proofs}

This appendix gives the proofs of the theoretical results stated in
Section~3. We use the same notation as in the main text. In particular,
\(\Psi_\ell:\mathcal X_t\to\mathbb R^{|\mathcal V|}\) denotes the downstream
logit map from layer \(\ell\) to the target-position logits, and
\[
\mathcal J_\ell(X)
=
\phi(\Psi_\ell(X)),
\qquad
\phi(z)
=
D_{\mathrm{KL}}\!\left(q\,\|\,\operatorname{softmax}(z)\right).
\]
We write \(p(z)=\operatorname{softmax}(z)\). The centering projection
\[
P=I-\frac{1}{|\mathcal V|}\mathbf 1\mathbf 1^\top
\]
removes the logit-shift direction, which is invisible to the softmax.

\subsection{Preliminaries: softmax Fisher geometry}
\label{app:softmax_fisher}

We first record elementary identities for the KL loss on logits.

\begin{lemma}[Softmax Fisher identities]
\label{lem:softmax_fisher}
Let
\[
\phi(z)
=
D_{\mathrm{KL}}\!\left(q\,\|\,\operatorname{softmax}(z)\right),
\qquad
p=\operatorname{softmax}(z).
\]
Then
\[
\nabla_z\phi(z)=p-q,
\qquad
\nabla_z^2\phi(z)=F(p):=\operatorname{diag}(p)-pp^\top.
\]
Moreover,
\[
F(p)\mathbf 1=0,
\]
so the Hessian is invariant to additive shifts of the logits.
\end{lemma}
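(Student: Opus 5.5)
The plan is a direct computation. First write the KL loss explicitly in terms of the logits, then differentiate twice, and finally read off the shift invariance from the row sums of the Hessian.

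Write
\[
\phi(z)=\sum_a q_a\log q_a-\sum_a q_a\log p_a(z),
\qquad
\log p_a(z)=z_a-\log\sum_b e^{z_b}.
\]
The first sum is constant in \(z\). Using \(\sum_a q_a=1\), this reduces the loss to
\[
\phi(z)=\text{const}-\langle q,z\rangle+\operatorname{lse}(z),
\qquad
\operatorname{lse}(z):=\log\sum_b e^{z_b}.
\]

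The gradient then follows from the standard identity \(\nabla\operatorname{lse}(z)=p(z)\), since \(\partial_{z_a}\operatorname{lse}=e^{z_a}/\sum_b e^{z_b}=p_a\). This gives \(\nabla_z\phi=p-q\).

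For the Hessian, the term \(-\langle q,z\rangle\) is linear and contributes nothing. It therefore suffices to differentiate \(p_a(z)\) in \(z_c\):
\[
\partial_{z_c}p_a=p_a(\delta_{ac}-p_c).
\]
This is obtained by the quotient rule on \(e^{z_a}/\sum_b e^{z_b}\). In matrix form it reads \(\nabla^2\phi=\operatorname{diag}(p)-pp^\top=F(p)\). Notably, the Hessian does not depend on \(q\).

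For the kernel statement, compute
\[
(F(p)\mathbf 1)_a=p_a-p_a\sum_c p_c=0,
\]
using \(\sum_c p_c=1\). Shift invariance of the Hessian then follows. Softmax satisfies \(p(z+c\mathbf 1)=p(z)\), so \(F(p(z+c\mathbf 1))=F(p(z))\). Equivalently, \(\phi\) itself is invariant under \(z\mapsto z+c\mathbf 1\), because \(\operatorname{lse}(z+c\mathbf 1)=\operatorname{lse}(z)+c\) and \(\langle q,c\mathbf 1\rangle=c\) cancel.

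The only point needing care is the convention that \(q\) is a probability vector (\(\sum_a q_a=1\)). This is used both in the reduction to \(\operatorname{lse}\) and in the cancellation for shift invariance, and I will state it explicitly. Beyond that, no real obstacle is expected.
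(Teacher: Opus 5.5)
Your proposal is correct and follows the paper's proof essentially step for step: you rewrite $\phi$ as a constant minus $\langle q,z\rangle$ plus log-sum-exp, differentiate to get $p-q$, use $\partial_{z_c}p_a=p_a(\delta_{ac}-p_c)$ for the Hessian, and check $F(p)\mathbf 1=p-p=0$. Stating $\sum_a q_a=1$ explicitly (the paper uses it silently) and spelling out what the shift-invariance claim means are small but welcome additions.
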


\begin{proof}
Since
\[
\phi(z)
=
\sum_a q_a\log q_a
-
\sum_a q_a\log p_a(z),
\]
and
\[
\log p_a(z)=z_a-\log\sum_b e^{z_b},
\]
we have
\[
\phi(z)
=
\mathrm{const}
-
\sum_a q_a z_a
+
\log\sum_b e^{z_b}.
\]
Therefore
\[
\nabla_z\phi(z)
=
-q+p.
\]
Differentiating \(p=\operatorname{softmax}(z)\) gives
\[
\frac{\partial p_a}{\partial z_b}
=
p_a(\delta_{ab}-p_b),
\]
and hence
\[
\nabla_z^2\phi(z)
=
\operatorname{diag}(p)-pp^\top.
\]
Finally,
\[
F(p)\mathbf 1
=
\operatorname{diag}(p)\mathbf 1
-
pp^\top\mathbf 1
=
p-p=0.
\]
\end{proof}

\subsection{Proof of Theorem~\ref{th1}}
\label{app:proof_th1}

\begin{proof}
We prove the three claims in order.

\paragraph{1. Local Hessian decomposition.}
Let
\[
\bar\Psi_\ell=P\Psi_\ell,
\qquad
\bar J_\ell=D\bar\Psi_\ell(X_\ell^\ast).
\]
Because the softmax is invariant to additive constants,
\[
\operatorname{softmax}(\Psi_\ell(X))
=
\operatorname{softmax}(\bar\Psi_\ell(X))
\]
up to the harmless choice of centered logit representative. Thus we may
differentiate through \(\bar\Psi_\ell\).

By the second-order chain rule,
\[
\nabla^2\mathcal J_\ell(X)
=
D\bar\Psi_\ell(X)^\top
\nabla_z^2\phi(\bar\Psi_\ell(X))
D\bar\Psi_\ell(X)
+
\sum_{a=1}^{|\mathcal V|}
\frac{\partial\phi}{\partial z_a}(\bar\Psi_\ell(X))
\nabla^2\bar\Psi_{\ell,a}(X).
\]
Evaluating at \(X=X_\ell^\ast\), using Lemma~\ref{lem:softmax_fisher}, and
writing
\[
p^\ast=\operatorname{softmax}(\Psi_\ell(X_\ell^\ast)),
\qquad
F^\ast=\operatorname{diag}(p^\ast)-p^\ast p^{\ast\top},
\]
we obtain
\[
\nabla^2\mathcal J_\ell(X_\ell^\ast)
=
\bar J_\ell^\top F^\ast \bar J_\ell
+
R_\ell,
\]
where
\[
R_\ell
=
\sum_{a=1}^{|\mathcal V|}
(p_a^\ast-q_a)
\nabla^2\bar\Psi_{\ell,a}(X_\ell^\ast).
\]
By Cauchy-Schwarz,
\[
\|R_\ell\|_{\mathrm{op}}
\le
\left(
\sum_{a=1}^{|\mathcal V|}
(p_a^\ast-q_a)^2
\right)^{1/2}
\left(
\sum_{a=1}^{|\mathcal V|}
\|\nabla^2\bar\Psi_{\ell,a}(X_\ell^\ast)\|_{\mathrm{op}}^2
\right)^{1/2}.
\]
The second factor is bounded by \(M_\ell\) by assumption, hence
\[
\|R_\ell\|_{\mathrm{op}}
\le
M_\ell\|p^\ast-q\|_2.
\]
Since \(\|u\|_2\le \|u\|_1\), and Pinsker's inequality gives
\[
\|p^\ast-q\|_1
\le
\sqrt{2D_{\mathrm{KL}}(q\|p^\ast)}
=
\sqrt{2\varepsilon_\ast},
\]
we get
\[
\|R_\ell\|_{\mathrm{op}}
\le
M_\ell\sqrt{2\varepsilon_\ast}.
\]

\paragraph{2. Pullback Fisher geometry.}
Define
\[
K_\ell=\bar J_\ell^\top F^\ast\bar J_\ell.
\]
By Taylor expansion of \(\mathcal J_\ell\) around \(X_\ell^\ast\), for
\(X=X_\ell^\ast+\delta X\),
\[
\mathcal J_\ell(X_\ell^\ast+\delta X)
=
\mathcal J_\ell(X_\ell^\ast)
+
\langle\nabla\mathcal J_\ell(X_\ell^\ast),\delta X\rangle
+
\frac12
\langle \delta X,
\nabla^2\mathcal J_\ell(X_\ell^\ast)\delta X\rangle
+
O(\|\delta X\|^3),
\]
provided the third derivative is bounded in the neighborhood. Substituting the
decomposition above gives
\[
\mathcal J_\ell(X_\ell^\ast+\delta X)
=
\mathcal J_\ell(X_\ell^\ast)
+
\langle\nabla\mathcal J_\ell(X_\ell^\ast),\delta X\rangle
+
\frac12
\langle\delta X,K_\ell\delta X\rangle
+
\frac12
\langle\delta X,R_\ell\delta X\rangle
+
O(\|\delta X\|^3).
\]
Using the bound on \(R_\ell\),
\[
|\langle\delta X,R_\ell\delta X\rangle|
\le
M_\ell\sqrt{2\varepsilon_\ast}\|\delta X\|^2.
\]
Absorbing constants into the \(O(\cdot)\) notation gives
\[
\mathcal J_\ell(X_\ell^\ast+\delta X)
=
\mathcal J_\ell(X_\ell^\ast)
+
\langle\nabla\mathcal J_\ell(X_\ell^\ast),\delta X\rangle
+
\frac12
\langle\delta X,K_\ell\delta X\rangle
+
O(\sqrt{\varepsilon_\ast}\|\delta X\|^2)
+
O(\|\delta X\|^3).
\]
Thus, when the linear term is small, the local low-loss set is approximated by
the ellipsoid induced by the quadratic form
\[
g_\ell(\delta X,\delta X)
=
\langle\delta X,K_\ell\delta X\rangle.
\]
If \(K_\ell u_i=\lambda_i u_i\), then the quadratic approximation has radius
proportional to \(\lambda_i^{-1/2}\) in direction \(u_i\), so high-curvature
directions are thin and low-curvature directions are thick.

\paragraph{3. Prediction-null directions.}
If
\[
\delta X\in\ker \bar J_\ell,
\]
then
\[
D\bar\Psi_\ell(X_\ell^\ast)[\delta X]=0.
\]
Thus such directions do not change the centered target logits to first order.
They are therefore first-order prediction-null directions.

If \(\bar J_\ell\) has locally constant rank \(r\), the constant-rank theorem
implies that \(\ker\bar J_\ell\) forms a local null foliation of dimension
\(\dim(\mathcal X_t)-r\). The first-order observable variation is represented
on the quotient
\[
\mathcal O_\ell=\mathcal X_t/\ker\bar J_\ell,
\]
or equivalently on any chosen \(r\)-dimensional complement of the null space.
\end{proof}

Also based on the Theorem\,\ref{th1} we have:

\begin{corollary}[Teacher-KL local geometry]
\label{cor:teacher_kl}
Let \(X_\ell^\star\) be a reference hidden state and set
\[
q=p^\star:=\mathrm{softmax}(\Psi_\ell(X_\ell^\star)).
\]
Then \(X_\ell^\star\) is a stationary point of the teacher-KL loss-to-go
\(J_\ell(X)=D_{\mathrm{KL}}(p^\star\|
\mathrm{softmax}(\Psi_\ell(X)))\). In particular,
\[
\nabla J_\ell(X_\ell^\star)=0.
\]
Consequently, under the assumptions of Theorem~\ref{th1},
\[
J_\ell(X_\ell^\star+\delta X)-J_\ell(X_\ell^\star)
=
\frac{1}{2}\langle \delta X,K_\ell\delta X\rangle
+
O(\sqrt{\epsilon^\star}\|\delta X\|^2)
+
O(\|\delta X\|^3).
\]
\end{corollary}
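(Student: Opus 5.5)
The corollary follows from Lemma~\ref{lem:softmax_fisher} and part~2 of Theorem~\ref{th1}. We first show stationarity, then substitute into the local Taylor expansion.

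\emph{Stationarity.} By the chain rule and Lemma~\ref{lem:softmax_fisher},
\[
\nabla J_\ell(X)=D\Psi_\ell(X)^\top\bigl(\operatorname{softmax}(\Psi_\ell(X))-q\bigr).
\]
At $X=X_\ell^\star$ with $q=p^\star$, the bracket is $p^\star-p^\star=0$, so $\nabla J_\ell(X_\ell^\star)=0$. The same holds if we differentiate through the centered map $\bar\Psi_\ell$, since $P^\top(p-q)=p-q$ whenever both are probability vectors (so $\mathbf 1^\top(p-q)=0$). Also $J_\ell(X_\ell^\star)=D_{\mathrm{KL}}(p^\star\|p^\star)=0$, hence $\varepsilon_\ast=0$.

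\emph{Expansion.} Theorem~\ref{th1} places no restriction on $q$, so we may take $q=p^\star$. Its hypotheses ($C^2$ regularity of $\Psi_\ell$ and the bound $M_\ell$) are exactly those assumed in the corollary. Equation~\eqref{eq:local_taylor} then applies. The constant term vanishes and, by the first step, so does the linear term. What remains is
\[
\tfrac12\langle\delta X,K_\ell\delta X\rangle+O(\sqrt{\varepsilon^\star}\|\delta X\|^2)+O(\|\delta X\|^3).
\]
By \eqref{eq:remainder_bound}, $\|R_\ell\|_{\mathrm{op}}\le M_\ell\|p^\star-q\|_2=0$, so the middle term is in fact identically zero. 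Writing it as $O(\sqrt{\epsilon^\star}\|\delta X\|^2)$ is therefore true but vacuous, and I would note that the statement remains valid as written.

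\emph{Main obstacle.} The only delicate point is the cubic remainder. It requires a bounded third derivative of $J_\ell$ near $X_\ell^\star$, which the proof of Theorem~\ref{th1} also assumes implicitly. I would justify it as follows. The map $\phi$ is smooth, with all derivatives bounded because softmax is smooth. If $\Psi_\ell$ is $C^3$ on the neighborhood (true for smooth activations with LayerNorm denominators bounded away from zero), then $J_\ell$ is $C^3$ on a compact neighborhood, and Taylor's theorem with remainder gives the $O(\|\delta X\|^3)$ term. If only $C^2$ is available, I would fall back to an $o(\|\delta X\|^2)$ remainder instead.
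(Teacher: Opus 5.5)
Your proposal is correct and follows the paper's route: the paper gives no separate proof and reads the corollary off Theorem~\ref{th1}. The gradient $D\Psi_\ell(X_\ell^\star)^\top(p^\star-q)$ from Lemma~\ref{lem:softmax_fisher} vanishes at $q=p^\star$, and substituting this into \eqref{eq:local_taylor} gives the stated expansion. Your extra observations are accurate refinements: $\varepsilon_\ast=0$, so $R_\ell=0$ exactly and the $O(\sqrt{\varepsilon_\ast}\|\delta X\|^2)$ term is vacuous, and the cubic remainder needs a bounded third derivative, which the paper's proof of Theorem~\ref{th1} tacitly assumes.
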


For the usual supervised cross-entropy objective, \(q=y\) is the one-hot
next-token target. Then \(p^\star-q\) is generally nonzero, even when the model
predicts the correct token with high probability. The expansion in
Theorem~\ref{th1} therefore contains a potentially important linear term $
\langle \nabla J_\ell(X_\ell^\star),\delta X\rangle. $
In this setting \(K_\ell\) should be interpreted as the local output-curvature
operator rather than as a complete approximation to the supervised loss change.
For this reason, our direct Fisher-validation experiments use the teacher-KL choice to isolate the curvature term. A systematic evaluation of the full linear-plus-quadratic approximation for supervised cross-entropy is left to future work.

\subsection{Proof of Proposition~\ref{pr1}}
\label{app:proof_prop1}

\begin{proof}
Let \(P_s:\mathcal X_t\to\mathbb R^d\) be the projection onto the \(s\)-th
token fiber. Its adjoint \(P_s^\top:\mathbb R^d\to\mathcal X_t\) inserts a
token-level perturbation into position \(s\). By definition,
\[
H_{\ell,s}
=
D_{X_{\ell,s}}\bar\Psi_\ell(X_\ell^\ast)
=
\bar J_\ell P_s^\top.
\]

\paragraph{1. Fisher-Jacobian identity.}
The tokenwise score is
\[
\kappa_{\ell,s}
=
\operatorname{tr}(P_sK_\ell P_s^\top).
\]
Using \(K_\ell=\bar J_\ell^\top F^\ast\bar J_\ell\), we get
\[
P_sK_\ell P_s^\top
=
P_s\bar J_\ell^\top F^\ast\bar J_\ell P_s^\top
=
H_{\ell,s}^\top F^\ast H_{\ell,s}.
\]
Therefore
\[
\kappa_{\ell,s}
=
\operatorname{tr}(H_{\ell,s}^\top F^\ast H_{\ell,s}).
\]
Since \(F^\ast\) is positive semidefinite,
\[
\operatorname{tr}(H_{\ell,s}^\top F^\ast H_{\ell,s})
=
\|F^{\ast 1/2}H_{\ell,s}\|_F^2.
\]

\paragraph{2. Causal support.}
In a decoder-only transformer with a standard causal mask, perturbations can
propagate only forward along causal edges from earlier token positions to later
or equal target positions. If token \(s\) is not in the downstream ancestor set
of the target token \(t\) from layer \(\ell\) to \(L\), then no directed
computational path connects \(X_{\ell,s}\) to the target-position logits.
Consequently,
\[
D_{X_{\ell,s}}\bar\Psi_\ell(X_\ell^\ast)=0,
\]
that is,
\[
H_{\ell,s}=0.
\]
The identity above then gives
\[
\kappa_{\ell,s}=0.
\]

\paragraph{3. Pathwise coupling bound.}
Let \(G_m^{(r\leftarrow u)}\) denote the token-block Jacobian of block \(m\).
By repeated application of the chain rule, the downstream Jacobian from
\((\ell,s)\) to the target token at layer \(L\) can be written as a sum over
directed causal paths:
\[
D_{X_{\ell,s}}X_{L,t}
=
\sum_{\pi\in\Pi_{\ell:s\to t}}
\prod_{(m,u\to r)\in\pi}
G_m^{(r\leftarrow u)} ,
\]
where the product is ordered along the path. Therefore,
\[
\|D_{X_{\ell,s}}X_{L,t}\|_{\mathrm{op}}
\le
\sum_{\pi\in\Pi_{\ell:s\to t}}
\prod_{(m,u\to r)\in\pi}
\|G_m^{(r\leftarrow u)}\|_{\mathrm{op}}.
\]
Since
\[
H_{\ell,s}
=
D Z_t(X_L^\ast)\,
D_{X_{\ell,s}}X_{L,t},
\]
we have
\[
\|H_{\ell,s}\|_{\mathrm{op}}
\le
\|D Z_t(X_L^\ast)\|_{\mathrm{op}}
\sum_{\pi\in\Pi_{\ell:s\to t}}
\prod_{(m,u\to r)\in\pi}
\|G_m^{(r\leftarrow u)}\|_{\mathrm{op}}.
\]
Finally,
\[
\kappa_{\ell,s}
=
\operatorname{tr}(H_{\ell,s}^\top F^\ast H_{\ell,s})
\le
\|F^\ast\|_{\mathrm{op}}\|H_{\ell,s}\|_F^2.
\]
Using the same bound with the Frobenius norm, or absorbing the token hidden
dimension into the constant if operator norms are used, yields
\[
\kappa_{\ell,s}
\le
\|F^\ast\|_{\mathrm{op}}
\|D Z_t(X_L^\ast)\|_{\mathrm{op}}^2
\left(
\sum_{\pi\in\Pi_{\ell:s\to t}}
\prod_{(m,u\to r)\in\pi}
\|G_m^{(r\leftarrow u)}\|_{\mathrm{op}}
\right)^2.
\]
\end{proof}

\subsection{Proof of Proposition~\ref{pr2}}
\begin{proof}
Let
\[
K_\ell=\sum_i\lambda_i u_i u_i^\top,
\qquad
\lambda_1\ge\lambda_2\ge\cdots\ge 0,
\]
be an eigendecomposition of \(K_\ell\). Since
\(\Pi_\ell^{(k)}\) projects onto the span of \(u_1,\dots,u_k\), we can write
\[
\delta X=\sum_i a_i u_i,
\qquad
\delta X_{\mathrm{tail}}
=
(I-\Pi_\ell^{(k)})\delta X
=
\sum_{i>k}a_i u_i.
\]
Hence
\[
\left\langle
\delta X_{\mathrm{tail}},
K_\ell
\delta X_{\mathrm{tail}}
\right\rangle
=
\sum_{i>k}\lambda_i a_i^2
\le
\lambda_{k+1}
\sum_{i>k}a_i^2
=
\lambda_{k+1}
\|\delta X_{\mathrm{tail}}\|^2.
\]
This proves \eqref{eq:geometry_reduction_tail_bound}.

For the reduced state
\[
\widetilde X=X_\ell^\ast+\Pi_\ell^{(k)}\delta X,
\]
Theorem~\ref{th1} gives
\begin{align*}
\mathcal J_\ell(\widetilde X)
={}&
\mathcal J_\ell(X_\ell^\ast)
+
\left\langle
\nabla\mathcal J_\ell(X_\ell^\ast),
\Pi_\ell^{(k)}\delta X
\right\rangle
+
\frac12
\left\langle
\Pi_\ell^{(k)}\delta X,
K_\ell
\Pi_\ell^{(k)}\delta X
\right\rangle
\\
&+
O\!\left(\sqrt{\varepsilon_\ast}
\|\Pi_\ell^{(k)}\delta X\|^2\right)
+
O(\|\Pi_\ell^{(k)}\delta X\|^3).
\end{align*}
Since \(\Pi_\ell^{(k)}\) is an orthogonal projector,
\[
\|\Pi_\ell^{(k)}\delta X\|\le \|\delta X\|,
\]
so the remainder is bounded by
\[
O\!\left(\sqrt{\varepsilon_\ast}\|\delta X\|^2\right)
+
O(\|\delta X\|^3).
\]

Finally, the rank rule follows by using the spectral tail
\[
\sum_{i>k}\lambda_i(K_\ell)
\]
as a layerwise proxy for the total discarded output-sensitive curvature. If
the tail is at most a fraction \(\tau\) of the total curvature, then the
discarded subspace contains only a controlled fraction of the layer's local
predictive curvature.
\end{proof}

\newpage
\section{Experimental Protocol}
\label{app:experimental_protocol}

Appendix B collects experimental details shared across the perturbation, rank-allocation, pruning, and distillation experiments. The goal of these settings is to keep the geometry measurements local and comparable across models: the transformer weights are fixed, calibration and evaluation examples are separated, and curvature quantities are estimated without materializing the full pullback-Fisher matrix.

\paragraph{Models.}
We evaluate open decoder-only language models ranging from approximately 1B to
9B parameters. All models are
evaluated without fine-tuning. We disable key-value caching during curvature
estimation so that hidden-state perturbations are propagated through the full
downstream computation.

\begin{table}[t]
\centering
\caption{Model checkpoints used in the experiments.}
\label{tab:model_checkpoints}
\begin{tabular}{lllr}
\toprule
Model name in paper & Hugging Face checkpoint & Model reference & Params \\
\midrule
SmolLM2-1.7B & \texttt{HuggingFaceTB/SmolLM2-1.7B} &\cite{allal2025smollm2}& 1.7B \\
LLaMA-3.2-1B & \texttt{meta-llama/Llama-3.2-1B} &\cite{grattafiori2024llama3}& 1B \\
Qwen2.5-3B & \texttt{Qwen/Qwen2.5-3B} &\cite{yang2025qwen25}& 3B \\
Phi-3 Mini & \texttt{microsoft/Phi-3-mini-4k-instruct} &\cite{abdin2024phi3}& 3.8B \\
Mistral-7B & \texttt{mistralai/Mistral-7B-v0.1}&\cite{jiang2023mistral} & 7B \\
LLaMA-2-7B & \texttt{meta-llama/Llama-2-7b-hf}&\cite{touvron2023llama2} & 7B \\
Gemma-2B & \texttt{google/gemma-2-2b}& \cite{gemma2024gemma}& 2B \\
Gemma-7B & \texttt{google/gemma-7b}& \cite{gemma2024gemma}& 7B \\
OLMo-3-7B & \texttt{allenai/OLMo-3-7B}& \cite{olmo2025olmo3}& 7B \\

Gemma-2-9B & \texttt{google/gemma-2-9b}& \cite{gemma2024gemma}& 9B \\
\bottomrule
\end{tabular}
\end{table}

\paragraph{Datasets.}
We use held-out text from WikiText, OpenWebText, and FineWeb. For each dataset,
we sample fixed-length token sequences and choose target positions among
non-padding tokens. Calibration examples are used only to estimate geometric
quantities or baseline saliency scores; evaluation examples are disjoint unless
otherwise stated.

\paragraph{Target positions.}
For each sequence, the target position is sampled from the second half of the
non-padding context. This avoids degenerate early-token targets while preserving
causal ancestry over a substantial prefix. All token-level scores are computed
only over causal ancestors of the target position.

\paragraph{Precision and attention implementation.}
Forward passes are run in the inference precision of the model, typically
bfloat16. Curvature products, Rayleigh quotients, trace estimates, and bootstrap
statistics are accumulated in float32. For experiments involving
Jacobian-vector or vector-Jacobian products, we use eager/math attention rather
than fused flash attention, because fused attention kernels are not always
compatible with higher-order automatic differentiation.

\paragraph{Matrix-free curvature products.}
We never materialize the pullback Fisher matrix \(K_\ell\). Products
\(K_\ell v\) are computed by the sequence JVP-Fisher-VJP:
\[
v \mapsto J_\ell v \mapsto F^\star J_\ell v
\mapsto J_\ell^\top F^\star J_\ell v.
\]
The middle multiplication by \(F^\star\) is implemented as
\[
F^\star u
=
p^\star \odot u
-
p^\star \langle p^\star,u\rangle .
\]

\newpage
\section{Numerical Estimation of Predictive Geometry}
\label{app:numerics}

This appendix describes how we estimate the geometric quantities used in
Section~4 without explicitly forming the pullback Fisher matrix \(K_\ell\).
The key observation is that all required quantities can be computed using
matrix-free products with \(K_\ell\), implemented by Jacobian-vector products
(JVPs) and vector-Jacobian products (VJPs). This follows the standard automatic
differentiation approach to matrix-free curvature products, related to
Pearlmutter's Hessian-vector product method~\cite{pearlmutter1994fast}.

\subsection{Matrix-free multiplication by \(K_\ell\)}
\label{app:matrix_free_K}

Recall that
\[
K_\ell
=
\bar J_\ell^\top F^\ast \bar J_\ell,
\qquad
\bar J_\ell=D\bar\Psi_\ell(X_\ell^\ast).
\]
For any vector \(v\in\mathcal X_t\),
\[
K_\ell v
=
\bar J_\ell^\top
F^\ast
(\bar J_\ell v).
\]
Thus a product \(K_\ell v\) can be computed in three steps:
\[
u=\bar J_\ell v,
\qquad
w=F^\ast u,
\qquad
K_\ell v=\bar J_\ell^\top w.
\]
The middle multiplication is cheap because
\[
F^\ast u
=
p^\ast\odot u
-
p^\ast\langle p^\ast,u\rangle.
\]
Therefore, no \(|\mathcal V|\times|\mathcal V|\) Fisher matrix needs to be
materialized.

\begin{algorithm}[h]
\caption{Matrix-free multiplication by \(K_\ell\)}
\label{alg:matrix_free_K}
\begin{algorithmic}[1]
\Require Hidden state \(X_\ell^\ast\), vector \(v\in\mathcal X_t\)
\State \(u \leftarrow D\bar\Psi_\ell(X_\ell^\ast)[v]\) \Comment{JVP}
\State \(w \leftarrow p^\ast\odot u - p^\ast\langle p^\ast,u\rangle\)
\State \(z \leftarrow D\bar\Psi_\ell(X_\ell^\ast)^\top[w]\) \Comment{VJP}
\State \Return \(z\)
\end{algorithmic}
\end{algorithm}
\subsection{PyTorch implementation with JVP and VJP}
\label{app:pytorch_jvp_vjp}

In PyTorch, the downstream logit map
\[
\bar\Psi_\ell:\mathcal X_t\to\mathbb R^{|\mathcal V|}
\]
can be represented as a function that takes a layer-\(\ell\) hidden state and
runs only the remaining transformer blocks, final normalization, and language
model head. Matrix-free products with \(K_\ell\) can then be implemented using
\texttt{torch.func.jvp} and \texttt{torch.func.vjp}.

A schematic implementation is:

\begin{verbatim}
import torch
from torch.func import jvp, vjp

def centered_logits_from_layer(x_l):
    # Runs blocks ell,...,L-1, final norm, and lm_head.
    # Returns target-position logits with mean removed.
    logits = downstream_model_from_layer(x_l)      # shape: [vocab]
    return logits - logits.mean(dim=-1, keepdim=True)

def fisher_mv(p, u):
    # F u = diag(p)u - p(p^T u)
    return p * u - p * torch.sum(p * u, dim=-1, keepdim=True)

def K_mv(x_l_star, v):
    # JVP: u = J v
    logits_star, u = jvp(centered_logits_from_layer,
                         (x_l_star,),
                         (v,))
    p = torch.softmax(logits_star, dim=-1)

    # Fisher multiplication
    w = fisher_mv(p, u)

    # VJP: z = J^T w
    _, vjp_fn = vjp(centered_logits_from_layer, x_l_star)
    z, = vjp_fn(w)
    return z
\end{verbatim}

In practice, one should avoid storing gradients for model parameters when only
hidden-state derivatives are needed. The model weights are kept fixed, and
\texttt{x\_l\_star} is treated as the differentiable input. Mixed precision can
be used for the forward pass, but curvature estimation is often more stable in
\texttt{float32} or with accumulation in \texttt{float32}.

For models with key-value caching, the downstream function should be defined
carefully so that the perturbation at layer \(\ell\) is applied to the hidden
state whose effect is being studied. Caches that would bypass the perturbed
hidden state must be disabled or recomputed.

\subsection{Estimating traces and effective ranks with Hutchinson probes}
\label{app:hutchinson}

For a symmetric matrix \(A\), Hutchinson's estimator gives
\[
\operatorname{tr}(A)
=
\mathbb E_{\xi}[\xi^\top A\xi],
\]
where \(\xi\) has independent Rademacher entries, i.e.
\(\xi_i\in\{-1,+1\}\) with equal probability. With \(m\) probes,
\[
\widehat{\operatorname{tr}}(A)
=
\frac1m\sum_{j=1}^m \xi_j^\top A\xi_j.
\]
This estimator is unbiased for \(\operatorname{tr}(A)\).

For \(A=K_\ell\), each term requires one matrix-free product \(K_\ell\xi_j\):
\[
\xi_j^\top K_\ell\xi_j
=
\langle \xi_j, K_\ell\xi_j\rangle.
\]
Thus
\[
\widehat{\operatorname{tr}}(K_\ell)
=
\frac1m\sum_{j=1}^m
\langle \xi_j,K_\ell\xi_j\rangle.
\]

The participation-ratio effective rank used in the main text is
\[
r_{\mathrm{eff}}(K_\ell)
=
\frac{\operatorname{tr}(K_\ell)^2}{\operatorname{tr}(K_\ell^2)}.
\]
The denominator can also be estimated by Hutchinson:
\[
\operatorname{tr}(K_\ell^2)
=
\mathbb E_{\xi}
[
\|K_\ell\xi\|^2
],
\]
so
\[
\widehat{\operatorname{tr}}(K_\ell^2)
=
\frac1m
\sum_{j=1}^m
\|K_\ell\xi_j\|^2.
\]
This requires one \(K_\ell\)-product per probe if \(K_\ell\xi_j\) is stored
for both the trace and squared-trace estimates.

\begin{algorithm}[h]
\caption{Hutchinson estimates of \(\operatorname{tr}(K_\ell)\) and
\(\operatorname{tr}(K_\ell^2)\)}
\label{alg:hutchinson}
\begin{algorithmic}[1]
\Require Matrix-free routine \(v\mapsto K_\ell v\), number of probes \(m\)
\State \(s_1\leftarrow 0,\quad s_2\leftarrow 0\)
\For{\(j=1,\dots,m\)}
    \State Sample Rademacher probe \(\xi_j\)
    \State \(y_j\leftarrow K_\ell\xi_j\)
    \State \(s_1\leftarrow s_1+\langle \xi_j,y_j\rangle\)
    \State \(s_2\leftarrow s_2+\langle y_j,y_j\rangle\)
\EndFor
\State \(\widehat{\operatorname{tr}}(K_\ell)\leftarrow s_1/m\)
\State \(\widehat{\operatorname{tr}}(K_\ell^2)\leftarrow s_2/m\)
\State \(\widehat r_{\mathrm{eff}}\leftarrow
\widehat{\operatorname{tr}}(K_\ell)^2/
\widehat{\operatorname{tr}}(K_\ell^2)\)
\State \Return \(\widehat{\operatorname{tr}}(K_\ell)\),
\(\widehat{\operatorname{tr}}(K_\ell^2)\),
\(\widehat r_{\mathrm{eff}}\)
\end{algorithmic}
\end{algorithm}

\subsection{Practical implementation details}
\label{app:implementation_details}

\paragraph{Choice of target distribution \(q\).}
For local geometry around a trained model trajectory, we use either the
one-hot next-token target or the model's own unperturbed prediction as a
teacher distribution. The teacher-KL choice
\[
q=p^\ast
\]
removes the first-order term at the reference point and directly measures
local sensitivity of the output distribution to hidden-state perturbations.
The one-hot choice corresponds to the usual supervised cross-entropy geometry
but may have a nonzero first-order term.

\paragraph{Batching.}
All estimates are averaged over calibration examples and target positions.
For a batch \(\mathcal B\), we estimate
\[
\bar K_\ell
=
\frac1{|\mathcal B|}
\sum_{(x,t)\in\mathcal B}
K_\ell(x,t).
\]
Matrix-free products with \(\bar K_\ell\) are implemented by averaging
per-example products.

\paragraph{Numerical precision.}
Curvature estimates can be sensitive to numerical precision. We run the model
forward pass in the precision used for inference, but accumulate Hutchinson
statistics, Lanczos inner products, and Rayleigh quotients in \texttt{float32}.
For very small models or diagnostic experiments, \texttt{float64} can be used
to validate the estimates.

\paragraph{Stopping criteria.}
For Lanczos, we use a fixed number of iterations \(q\) or stop when the change
in the estimated leading eigenspace is below a tolerance. For Hutchinson trace
estimation, we report the number of probes and use bootstrap confidence
intervals over probes and calibration examples when reporting trace-derived
quantities.

\paragraph{Memory.}
The method does not materialize \(K_\ell\). Its memory overhead is dominated by
the downstream forward graph needed for JVP/VJP and by the storage required for
Lanczos basis vectors. For large models, one may checkpoint downstream blocks or
process layers independently.

\section{Additional Details for Local-Geometry and Token-Pruning Experiments}
\label{app:exp-details-local-pruning}

This section provides additional details for the experiments used to validate the predictive-geometry interpretation of the pullback Fisher operator. The goal of these experiments is not to show that a single Fisher-based heuristic dominates all possible alternatives, but rather to test two specific consequences of the theory: (i) the terminal prediction loss induces a measurable local geometry on intermediate hidden states, and (ii) the token-fiber trace of this geometry provides a useful loss-aware token saliency signal.

For a fixed trained decoder-only transformer, let \(X_\ell^\star\) denote the residual-stream state at layer \(\ell\) along the unperturbed forward pass. Let the downstream computation from layer \(\ell\) to the target-position logits be denoted by \(\Psi_\ell\). Around \(X_\ell^\star\), the terminal predictive distribution induces the pullback Fisher operator
\[
K_\ell
=
J_\ell^\top F^\star J_\ell,
\]
where \(J_\ell\) is the Jacobian of the downstream map from \(X_\ell\) to centered target logits, and
\[
F^\star
=
\operatorname{diag}(p^\star) - p^\star p^{\star\top}
\]
is the softmax Fisher matrix at the reference output distribution \(p^\star\). Intuitively, \(K_\ell\) measures which perturbations of the hidden state are locally visible to the terminal predictive distribution.

\subsection{Local Taylor Validation}
\label{app:exp1-local-taylor}

The first experiment tests whether the local loss change induced by hidden-state perturbations is predicted by the Taylor expansion implied by the pullback Fisher geometry. This directly checks the central local-geometry claim: perturbations in hidden-state directions with large Fisher pullback curvature should produce larger changes in the terminal prediction.

For each model, layer \(\ell\), and target token position, we first run the model normally and record the reference hidden state \(X_\ell^\star\). We then sample random perturbations \(\delta X_\ell\) at several radii and inject \(X_\ell^\star + \delta X_\ell\) at layer \(\ell\), while keeping the downstream transformer fixed. We compare the true change in terminal loss to local predictors computed at the reference trajectory.

For supervised cross-entropy, the local approximation includes both the first-order term and the pullback-Fisher quadratic term:
\[
\Delta \mathrm{CE}
\approx
\left\langle \nabla_{X_\ell}\mathrm{CE}, \delta X_\ell \right\rangle
+
\frac{1}{2}
\delta X_\ell^\top K_\ell \delta X_\ell.
\]
We also evaluate two ablations:
\[
\Delta \mathrm{CE}_{\mathrm{linear}}
=
\left\langle \nabla_{X_\ell}\mathrm{CE}, \delta X_\ell \right\rangle,
\]
and
\[
\Delta \mathrm{CE}_{\mathrm{quad}}
=
\frac{1}{2}
\delta X_\ell^\top K_\ell \delta X_\ell.
\]
This distinction is important because supervised one-hot CE is not generally stationary at the reference prediction, so the linear term need not vanish.

For teacher-KL, we instead compare the true change in the KL divergence from the reference predictive distribution to the Fisher quadratic predictor:
\[
\Delta \mathrm{KL}(p^\star \,\|\, p_{\delta})
\approx
\frac{1}{2}
\delta X_\ell^\top K_\ell \delta X_\ell.
\]
In this setting, the reference point is stationary, so the Fisher quadratic term is the leading local contribution.

For each layer and perturbation radius, we report the coefficient of determination \(R^2\) and correlation between the true loss change and the corresponding local predictor. We aggregate results across models by first averaging within each model and radius, and then averaging across models so that each model receives equal weight. For visualization only, degenerate \(R^2\) values are clipped to \([-1,1]\), since very large negative \(R^2\) values can occur when the true loss changes have near-zero variance. Outlier diagnostics are retained separately.

If the local predictive geometry is meaningful, we expect the teacher-KL quadratic approximation to explain a substantial fraction of the local KL change at perturbation radii where the signal is above numerical noise but still within the local regime. For supervised CE, we expect the linear-plus-quadratic approximation to outperform the quadratic-only approximation. We do not necessarily expect a large gain over the linear-only predictor at very small radii, because supervised CE can be first-order dominated.

Thus the expected qualitative pattern is:
\[
\mathrm{CE}_{\mathrm{linear+quadratic}}
\approx
\mathrm{CE}_{\mathrm{linear}}
\gg
\mathrm{CE}_{\mathrm{quadratic-only}}
\]
for small perturbations, while
\[
\mathrm{TeacherKL}_{\mathrm{quadratic}}
\]
should become predictive once the perturbation-induced KL signal is sufficiently measurable. This validates the interpretation of \(K_\ell\) as a curvature or observability operator, rather than as a complete supervised-CE predictor by itself.

\begin{figure}
    \centering
    \includegraphics[width=\linewidth]{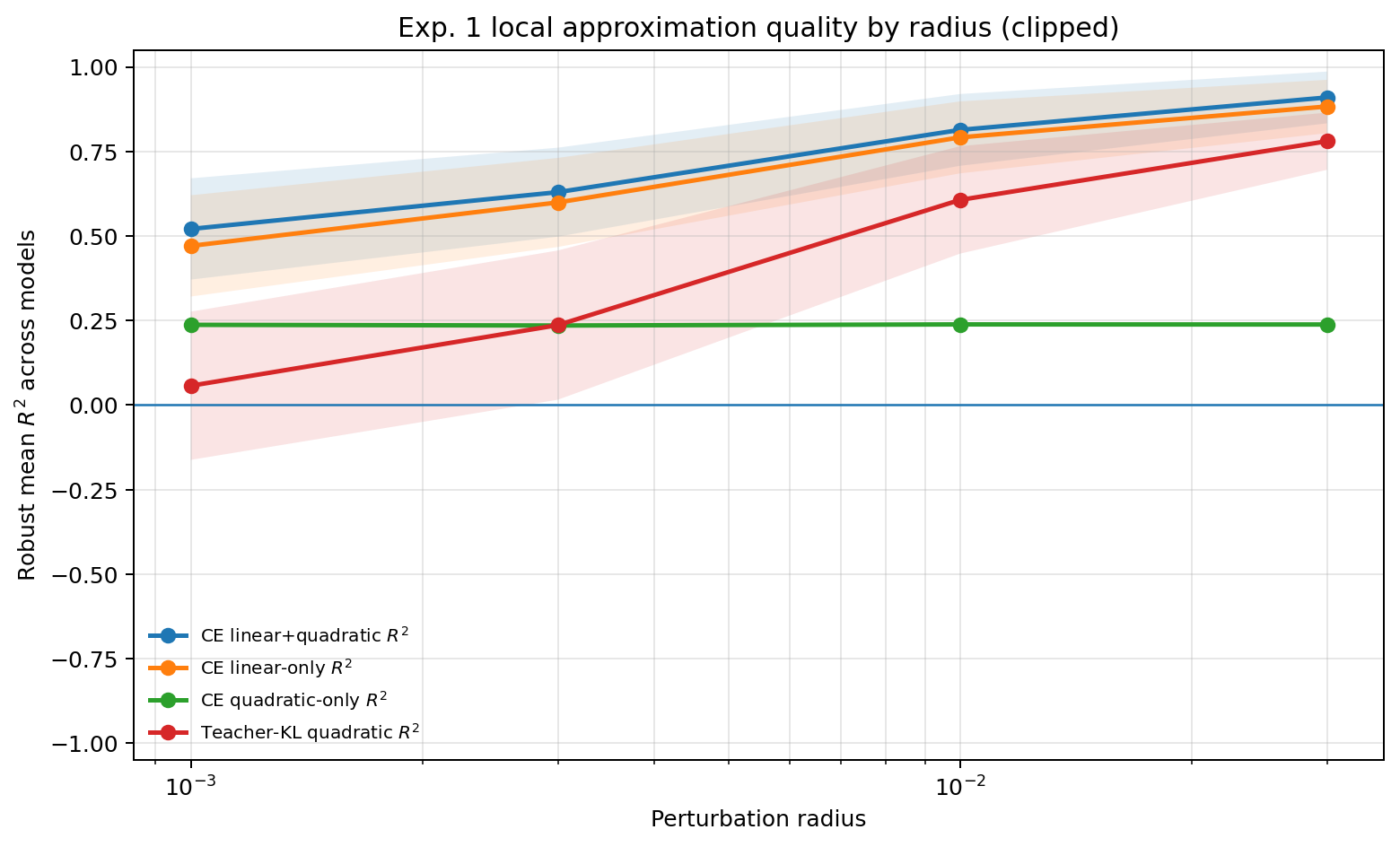}
    \caption{
\textbf{Local hidden-state geometry predicts terminal loss changes.}
We perturb intermediate residual-stream states at different radii and compare the true terminal loss change to local predictors induced by the downstream map.
For supervised CE, the linear-only and linear-plus-quadratic predictors are nearly indistinguishable, indicating that CE is first-order dominated in this perturbation regime.
The quadratic-only CE predictor is substantially weaker, as expected because supervised one-hot CE is generally not stationary at the reference prediction.
In contrast, the teacher-KL quadratic predictor isolates the pullback-Fisher curvature and becomes strongly predictive at moderate perturbation radii.
Curves show model-averaged \(R^2\); for visualization, \(R^2\) values are clipped to \([-1,1]\) before aggregation to avoid degenerate-regression outliers.
}
\label{fig:exp1-r2-by-radius}
    \label{fig:placeholder}
\end{figure}

\paragraph{Interpretation.}
A strong result for this experiment is not that the Fisher quadratic term alone predicts supervised CE. Rather, the intended conclusion is more precise: supervised CE requires the first-order term, while teacher-KL isolates the Fisher curvature. Therefore, the experiment separates two roles of the local geometry. The gradient term explains first-order supervised-loss sensitivity, whereas the pullback Fisher term captures second-order output-distribution sensitivity.

\subsection{Token-Fiber Saliency and Structured Token-Update Pruning}
\label{app:exp3-token-pruning}

The third experiment evaluates whether the tokenwise trace of the pullback Fisher operator provides a useful loss-aware saliency score for token updates. The theory predicts that if a token position has small pullback-Fisher trace, then perturbations localized to that token position are less visible to the terminal predictive distribution. Such token updates should therefore be safer to prune.

Let \(P_s\) denote the projection onto the residual-stream coordinates corresponding to token position \(s\). We define the tokenwise Fisher trace
\[
\kappa_{\ell,s}
=
\operatorname{tr}
\left(
P_s K_\ell P_s^\top
\right).
\]
Equivalently, \(\kappa_{\ell,s}\) measures the squared downstream sensitivity of token \(s\)'s hidden state, weighted by the output Fisher geometry. Large values indicate token positions whose hidden-state perturbations are locally visible to the final prediction, while small values indicate approximately prediction-insensitive token positions.

For each example and selected layer, we compute saliency scores over token positions. We then prune a fixed fraction \(\rho\) of token updates according to each score, selecting the least salient positions for removal. The model is evaluated after the pruning intervention, and we measure the increase in cross-entropy relative to the unpruned forward pass:
\[
\Delta \mathrm{CE}
=
\mathrm{CE}_{\mathrm{pruned}}
-
\mathrm{CE}_{\mathrm{base}}.
\]
Lower \(\Delta \mathrm{CE}\) indicates a better pruning criterion.

In the main analysis, we report \(\Delta \mathrm{CE}\) rather than \(\Delta\mathrm{PPL}\). Since perplexity exponentiates CE, rare catastrophic pruning failures can dominate mean \(\Delta\mathrm{PPL}\) and obscure the typical behavior. The CE difference is also equal to the log-perplexity ratio, making it a more stable primary metric:
\[
\Delta \mathrm{CE}
=
\log \frac{\mathrm{PPL}_{\mathrm{pruned}}}{\mathrm{PPL}_{\mathrm{base}}}.
\]

We compare the Fisher trace score against several alternative token saliency criteria:
\begin{itemize}
    \item \textbf{Random}: randomly chosen token positions.
    \item \textbf{Activation norm}: token hidden-state norm.
    \item \textbf{Attention}: mean attention mass to the target position.
    \item \textbf{\(J^\top J\) trace}: unweighted downstream Jacobian sensitivity.
    \item \textbf{Diagonal Fisher}: a diagonal approximation to the Fisher weighting.
    \item \textbf{Gradient-based scores}: first-order loss heuristics such as gradient-times-activation and top-logit gradient.
\end{itemize}

These baselines separate several possible explanations. Random pruning checks whether any signal is present at all. Activation norm tests whether large hidden activations are sufficient. Attention tests whether standard attention weights already capture the relevant token saliency. The \(J^\top J\) baseline tests whether the Fisher weighting itself adds value beyond unweighted downstream sensitivity. Gradient-based scores provide strong first-order loss-aware baselines for the specific pruning intervention.

For each method and pruning fraction \(\rho\), we report median or mean \(\Delta \mathrm{CE}\) across models. Aggregation is performed by first averaging within each model, method, and pruning fraction, and then averaging across models so that each model has equal weight.

We also report a Fisher-advantage plot:
\[
\Delta \mathrm{CE}_{\mathrm{baseline}}
-
\Delta \mathrm{CE}_{\mathrm{Fisher}}.
\]
Positive values indicate that Fisher pruning causes a smaller CE increase than the baseline. This relative plot is useful for comparing Fisher directly to each alternative, but it should be interpreted together with the absolute \(\Delta \mathrm{CE}\) Pareto curves.

If the tokenwise pullback-Fisher trace captures prediction-relevant information, then Fisher pruning should be substantially better than random pruning and should outperform naive activation-based criteria. We also expect it to be competitive with attention and Jacobian-style baselines. We do not require Fisher to dominate all gradient-based heuristics, because those criteria use direct first-order loss information tailored to the specific pruning intervention.

The expected qualitative ranking is therefore:
\[
\mathrm{Random}
\quad \text{and naive activation criteria}
\quad \text{worse than Fisher},
\]
while Fisher should lie near the stronger loss-aware or geometry-aware baselines:
\[
\mathrm{Fisher}
\approx
J^\top J
\approx
\mathrm{Attention},
\]
with direct gradient-based scores potentially competitive or stronger depending on the intervention.

\begin{figure}
    \centering
    \includegraphics[width=\linewidth]{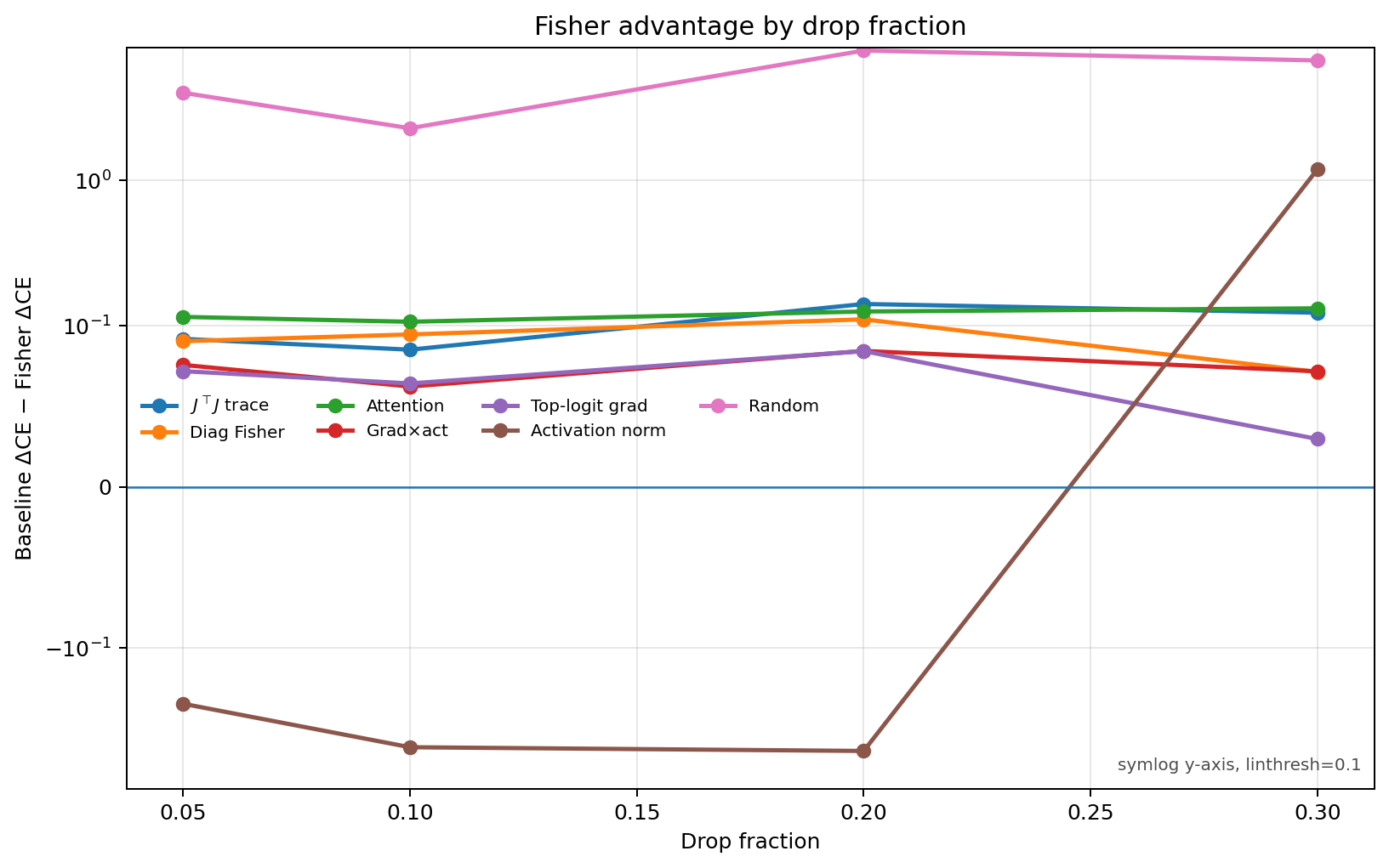}
    \caption{
\textbf{Fisher token saliency is competitive with standard pruning baselines.}
We prune a fraction of token updates according to each baseline score and report the Fisher advantage
\(\Delta \mathrm{CE}_{\mathrm{baseline}} - \Delta \mathrm{CE}_{\mathrm{Fisher}}\).
Positive values indicate that Fisher pruning produces a smaller CE increase than the baseline.
Fisher substantially outperforms random pruning and gives modest, consistent improvements over attention, \(J^\top J\), diagonal-Fisher, and gradient-times-activation criteria in the aggregate.
Activation norm is less stable across pruning fractions, performing better at small drop fractions but worse at the largest drop fraction.
The \(y\)-axis uses a symlog scale with linear threshold \(0.1\) to show both small differences near zero and large improvements over weak baselines.
}
\label{fig:exp3-fisher-advantage}
\end{figure}

\paragraph{Interpretation.}
This experiment should be interpreted as a practical validation that the pullback-Fisher trace contains usable token-level predictive information. A positive result is not necessarily that Fisher is the universally optimal pruning rule. Rather, the key conclusion is that the Fisher trace is a non-random, loss-aware geometric saliency score that identifies token updates whose removal produces smaller degradation than random or naive pruning. This supports the broader predictive-geometry view: the terminal loss induces a structured notion of local observability over token positions in the residual stream.

\newpage
\section{Details for Predictive-Geometry Distillation}
\label{app:pg-distillation}

\subsection{Low-Rank Student Construction}

For each teacher model, we construct a compressed student by replacing selected linear layers with low-rank factors. We compress the attention projections
\begin{equation}
q_{\mathrm{proj}},\quad
k_{\mathrm{proj}},\quad
v_{\mathrm{proj}},\quad
o_{\mathrm{proj}},
\end{equation}
and the MLP projections
\begin{equation}
\mathrm{gate}_{\mathrm{proj}},\quad
\mathrm{up}_{\mathrm{proj}},\quad
\mathrm{down}_{\mathrm{proj}}.
\end{equation}
A full linear map $W$ is replaced by
\begin{equation}
W \approx A B,
\qquad
A \in \mathbb{R}^{d_{\mathrm{out}}\times r},
\qquad
B \in \mathbb{R}^{r\times d_{\mathrm{in}}}.
\end{equation}
Only the low-rank factors are trained during recovery; all other student parameters are frozen.

We evaluate two initializers. The first is ordinary truncated SVD. The second is activation-aware SVD. For activation-aware SVD, we collect diagonal activation second moments
\begin{equation}
s_i^2 = \mathbb{E}[x_i^2],
\end{equation}
where $x$ is the input to the corresponding linear layer. We then compute a rank-$r$ approximation to the scaled matrix
\begin{equation}
W \operatorname{diag}(s)
\end{equation}
and unscale the right factor. This minimizes the activation-weighted reconstruction objective
\begin{equation}
\|(W-\hat W)\operatorname{diag}(s)\|_F^2.
\end{equation}
This controls for the possibility that geometry-aware distillation only helps because ordinary SVD is a weak initializer.

\subsection{KD Objectives}

We evaluate three output-level distillation objectives.

\paragraph{Forward KL.}
\begin{equation}
\mathcal{L}_{\mathrm{FKL}}
=
D_{\mathrm{KL}}(p_T \Vert p_S)
=
\sum_y p_T(y) \log \frac{p_T(y)}{p_S(y)}.
\end{equation}

\paragraph{Reverse KL.}
\begin{equation}
\mathcal{L}_{\mathrm{RKL}}
=
D_{\mathrm{KL}}(p_S \Vert p_T)
=
\sum_y p_S(y) \log \frac{p_S(y)}{p_T(y)}.
\end{equation}
Reverse KL is a stronger autoregressive distillation baseline because it is more mode-seeking and can be better matched to a lower-capacity student.

\paragraph{Skew KL.}
\begin{equation}
\mathcal{L}_{\mathrm{SkewKL}}
=
D_{\mathrm{KL}}
\left(
p_T
\middle\Vert
\alpha p_T + (1-\alpha)p_S
\right).
\end{equation}
We use $\alpha=0.1$ unless otherwise stated. For all KD objectives, logits are divided by temperature $\tau$, and the KD loss is multiplied by $\tau^2$. We use $\tau=2$.

\subsection{Predictive-Geometry Regularization}

For each KD objective, we evaluate a geometry-regularized variant:
\begin{equation}
\mathcal{L}
=
\mathcal{L}_{\mathrm{KD}}
+
\lambda_{\mathrm{CE}}\mathcal{L}_{\mathrm{CE}}
+
\lambda_{\mathrm{geo}}\mathcal{L}_{\mathrm{geo}}.
\end{equation}

The ideal geometry term is the local pullback-Fisher quadratic
\begin{equation}
\mathcal{L}_{\mathrm{geo}}
=
\frac{1}{|\mathcal{L}_{\mathrm{geo}}|}
\sum_{\ell \in \mathcal{L}_{\mathrm{geo}}}
\frac{1}{2}
\delta h_\ell^\top K^T_\ell \delta h_\ell,
\qquad
\delta h_\ell = h^S_\ell - h^T_\ell,
\end{equation}
where
\begin{equation}
K^T_\ell = J_\ell^\top F J_\ell
\end{equation}
is the teacher pullback-Fisher operator induced by the downstream teacher map from layer $\ell$ to the output distribution.

In the current implementation, we use a readout-Fisher proxy for computational robustness across model families:
\begin{equation}
h_\ell
\mapsto
\mathrm{LMHead}(\mathrm{FinalNorm}(h_\ell)).
\end{equation}
Thus,
\begin{equation}
\mathcal{L}_{\mathrm{geo}}
=
\frac{1}{|\mathcal{L}_{\mathrm{geo}}|}
\sum_{\ell \in \mathcal{L}_{\mathrm{geo}}}
\frac{1}{2}
\Delta z_\ell^\top F_T \Delta z_\ell,
\end{equation}
where
\begin{equation}
\Delta z_\ell
=
g_T(h^S_\ell) - g_T(h^T_\ell),
\qquad
g_T(h)=\mathrm{LMHead}_T(\mathrm{FinalNorm}_T(h)).
\end{equation}
This preserves the output-Fisher weighting of hidden-state errors, but is weaker than the full downstream pullback-Fisher operator. We therefore report it as a readout-Fisher predictive-geometry regularizer.

\subsection{Experimental Grid}

We evaluate two initializers:
\begin{equation}
\{\mathrm{SVD}, \mathrm{ASVD}\}.
\end{equation}
For each initializer, we evaluate:
\begin{equation}
\{
\mathrm{FKL},
\mathrm{RKL},
\mathrm{SkewKL},
\mathrm{FKL+Geo},
\mathrm{RKL+Geo},
\mathrm{SkewKL+Geo}
\}.
\end{equation}

The sweep is run on sub-billion-parameter models:
\begin{equation}
\mathrm{SmolLM2\text{-}135M},\quad
\mathrm{SmolLM2\text{-}360M},\quad
\mathrm{Qwen2.5\text{-}0.5B}.
\end{equation}
Unless otherwise stated, we use ranks
\begin{equation}
r \in \{16,32,64\},
\end{equation}
sequence length 256, batch size 1, temperature $\tau=2$, $\lambda_{\mathrm{CE}}=0.1$, and $\lambda_{\mathrm{geo}}=0.05$. The geometry term is evaluated every 4 optimization steps on layers
\begin{equation}
\{4,8,12,16,20\}.
\end{equation}
For ASVD, activation statistics are collected from 16 calibration batches.

\subsection{Metrics}

We report final perplexity,
\begin{equation}
\mathrm{PPL}(S),
\end{equation}
where lower is better; the compression gap to the teacher,
\begin{equation}
\Delta_{\mathrm{teacher}}
=
\mathrm{PPL}(S)-\mathrm{PPL}(T),
\end{equation}
where lower is better; and the improvement over compressed initialization,
\begin{equation}
\Delta_{\mathrm{init}}
=
\mathrm{PPL}(S_{\mathrm{init}})-\mathrm{PPL}(S_{\mathrm{final}}),
\end{equation}
where higher is better. We also report the teacher-KD divergence under the corresponding KD objective, where lower is better.

\subsection{Geometry Advantage}

For each matched setting of model, initializer, rank, and seed, we compute
\begin{equation}
A_{\mathrm{geo}}(\mathcal{L}_{\mathrm{KD}})
=
\mathrm{PPL}(\mathcal{L}_{\mathrm{KD}})
-
\mathrm{PPL}(\mathcal{L}_{\mathrm{KD}}+\mathcal{L}_{\mathrm{geo}}).
\end{equation}
A positive value means that adding the geometry term improves final perplexity. The main comparisons are
\begin{equation}
A_{\mathrm{geo}}(\mathrm{RKL})
\qquad\text{and}\qquad
A_{\mathrm{geo}}(\mathrm{SkewKL}),
\end{equation}
because they test whether geometry is complementary to stronger autoregressive KD objectives rather than only to vanilla forward-KL distillation.

\subsection{Theoretical examination}
\label{app:distillation-theory}

We briefly justify the geometry term used in predictive-geometry distillation.
Let \(h^T_\ell\) and \(h^S_\ell\) denote teacher and student hidden states at
layer \(\ell\), and let
\[
\delta h_\ell = h^S_\ell-h^T_\ell .
\]
Let \(g_T\) be the teacher readout map used by the regularizer. In the full
version \(g_T\) is the downstream teacher map from layer \(\ell\) to the target
logits; in our implementation it is the readout-Fisher proxy described above.
Write
\[
z_T=g_T(h^T_\ell),
\qquad
z_S=g_T(h^S_\ell),
\qquad
\Delta z=z_S-z_T,
\]
and
\[
p_T=\mathrm{softmax}(z_T),
\qquad
p_S=\mathrm{softmax}(z_S).
\]

\subsubsection{Exact Fisher identity in logit space.}
Consider the straight path
\[
z(t)=z_T+t\Delta z,
\qquad
p(t)=\mathrm{softmax}(z(t)).
\]
Define
\[
\psi(t)
=
D_{\mathrm{KL}}
\left(
p_T
\middle\Vert
p(t)
\right).
\]
Then \(\psi(0)=0\), \(\psi'(0)=0\), and
\[
\psi''(t)
=
\Delta z^\top
F(p(t))
\Delta z,
\]
where
\[
F(p(t))=\mathrm{diag}(p(t))-p(t)p(t)^\top
\]
is the softmax Fisher matrix. Therefore, by the integral form of Taylor's
theorem,
\begin{equation}
D_{\mathrm{KL}}(p_T\Vert p_S)
=
\int_0^1
(1-t)
\Delta z^\top
F(p(t))
\Delta z
\,dt .
\label{eq:path-integrated-fisher}
\end{equation}
Thus the exact teacher-student KL in logit space is a path-integrated Fisher
energy.

\subsubsection{Local pullback to hidden space.}
If \(g_T\) is differentiable at \(h^T_\ell\), with Jacobian
\[
J_\ell = Dg_T(h^T_\ell),
\]
then
\[
\Delta z
=
J_\ell \delta h_\ell
+
O(\|\delta h_\ell\|^2).
\]
Using~\eqref{eq:path-integrated-fisher} and smoothness of \(F(p)\), we obtain
\begin{equation}
D_{\mathrm{KL}}(p_T\Vert p_S)
=
\frac12
\delta h_\ell^\top
J_\ell^\top F(p_T)J_\ell
\delta h_\ell
+
O(\|\delta h_\ell\|^3).
\label{eq:local-hidden-pullback}
\end{equation}
Hence the quadratic predictive-geometry term
\begin{equation}
\frac12
\delta h_\ell^\top
K^T_\ell
\delta h_\ell,
\qquad
K^T_\ell
=
J_\ell^\top F(p_T)J_\ell,
\end{equation}
is the local hidden-space pullback of the exact teacher-student KL induced by
the hidden-state mismatch. This is why the regularizer penalizes hidden errors
according to their effect on the teacher predictive distribution rather than
their Euclidean size.

\subsubsection{When a small geometry regularizer helps.}
Let
\[
A(\theta)
=
\mathcal{L}_{\mathrm{KD}}(\theta)
+
\lambda_{\mathrm{CE}}\mathcal{L}_{\mathrm{CE}}(\theta)
\]
be the base distillation objective, and let \(G(\theta)\) be the predictive-
geometry regularizer. Suppose \(\theta_0\) is a nondegenerate local minimizer of
\(A\), with Hessian
\[
H_A=\nabla^2 A(\theta_0)\succ 0
\]
on the trainable subspace. Let \(\theta_\lambda\) be the nearby local minimizer
of
\[
A(\theta)+\lambda G(\theta)
\]
for small \(\lambda>0\). By the implicit function theorem,
\begin{equation}
\theta_\lambda
=
\theta_0
-
\lambda
H_A^{-1}
\nabla G(\theta_0)
+
O(\lambda^2).
\label{eq:theta-lambda-expansion}
\end{equation}
Therefore, for any differentiable evaluation functional \(V(\theta)\),
\begin{equation}
V(\theta_\lambda)-V(\theta_0)
=
-
\lambda
\left\langle
\nabla V(\theta_0),
H_A^{-1}\nabla G(\theta_0)
\right\rangle
+
O(\lambda^2).
\label{eq:regularizer-help-condition}
\end{equation}
Thus, for sufficiently small \(\lambda\), the geometry term improves \(V\)
whenever
\begin{equation}
\left\langle
\nabla V(\theta_0),
H_A^{-1}\nabla G(\theta_0)
\right\rangle
>0.
\end{equation}

In particular, if \(V=G\), then
\begin{equation}
G(\theta_\lambda)-G(\theta_0)
=
-
\lambda
\nabla G(\theta_0)^\top
H_A^{-1}
\nabla G(\theta_0)
+
O(\lambda^2).
\end{equation}
Hence, whenever \(\nabla G(\theta_0)\neq 0\), a sufficiently small geometry
weight locally decreases the predictive-geometry mismatch. The regularizer is
therefore expected to help when the compressed student has hidden-state errors
with large Fisher-weighted energy that are not already removed by the base KD
objective. Conversely, if the base KD objective already controls these
prediction-relevant hidden directions, or if the student error lies mostly in
prediction-null directions, the geometry term may provide little benefit.

\begin{tcolorbox}[colback=gray!5!white, colframe=gray!40!black,
                  title={\small Informal summary: when geometry helps distillation},
                  left=4pt, right=4pt, top=3pt, bottom=3pt]
\small
The predictive-geometry regularizer is a hidden-state surrogate for teacher-student
predictive mismatch. In logit space, the exact teacher-student KL is a
path-integrated Fisher energy; locally, pulling this Fisher geometry back through
the teacher readout gives the quadratic penalty
\[
    \frac12\delta h_\ell^\top K^T_\ell\delta h_\ell .
\]
Thus the regularizer does not ask the student to match all hidden-state
directions equally. It mainly penalizes errors in directions that the teacher
uses for prediction.

Consequently, geometry regularization is expected to help when the compressed
student makes errors with large Fisher-weighted energy, i.e. errors aligned with
high-curvature directions of \(K^T_\ell\), and when the base KD objective has not
already corrected those directions. It may help little, or even hurt, when the
student errors are mostly prediction-null, when the base KD loss already controls
the relevant directions, or when the geometry weight is too large.
\end{tcolorbox}

\subsection{Distillation Results}
\label{app:pg-distillation-results}

Table~\ref{tab:kd-geom-adv-summary} reports the paired geometry advantage
\[
A_{\mathrm{geo}}(\mathcal{L}_{\mathrm{KD}})
=
\mathrm{PPL}(\mathcal{L}_{\mathrm{KD}})
-
\mathrm{PPL}(\mathcal{L}_{\mathrm{KD}}+\mathcal{L}_{\mathrm{geo}}),
\]
computed over matched completed settings. Positive values indicate that adding
the predictive-geometry term improves the compressed student at fixed model,
initializer, rank, and seed. The geometry term is not uniformly beneficial for
forward KL, but it improves stronger autoregressive KD objectives more reliably:
RKL+Geo improves over RKL in 9/14 matched settings, and SkewKL+Geo improves over
SkewKL in 10/14 matched settings. The largest average gain is obtained for
SkewKL, suggesting that predictive geometry is complementary to mode-seeking and
skewed KD objectives rather than simply replacing output-level distillation.

Because the sweep contains incomplete runs, we report paired statistics only on
matched completed settings. A complete listing of finished and unfinished runs is
provided in the supplementary tables.

\begin{table*}[h]
\centering
\tiny
\setlength{\tabcolsep}{3pt}
\renewcommand{\arraystretch}{0.88}
\caption{Completed KD-geometry sweep results for Qwen2.5-0.5B. Lower is better for PPL and KD divergence; higher is better for improvement over compressed initialization.}
\label{tab:kd-geom-full-results-qwen}
\resizebox{\textwidth}{!}{
\begin{tabular}{lllrrrr}
\toprule
Model & Method & Rank & Compression & Init PPL $\downarrow$ & Final PPL $\downarrow$ & $\Delta_{\mathrm{init}}\uparrow$ \\
\midrule
Qwen2.5-0.5B & asvd+fkl & 16 & 40.546x & 2.44e+05 & 581.4 & 2.43e+05 \\
Qwen2.5-0.5B & asvd+rkl & 16 & 40.546x & 2.44e+05 & 653.0 & 2.43e+05 \\
Qwen2.5-0.5B & asvd+skewkl & 16 & 40.546x & 2.44e+05 & 551.2 & 2.43e+05 \\
Qwen2.5-0.5B & svd+fkl & 16 & 40.546x & 2.15e+06 & 690.9 & 2.15e+06 \\
Qwen2.5-0.5B & svd+fkl\_geom & 16 & 40.546x & 2.15e+06 & 675.9 & 2.15e+06 \\
Qwen2.5-0.5B & svd+rkl & 16 & 40.546x & 2.15e+06 & 761.5 & 2.15e+06 \\
Qwen2.5-0.5B & svd+rkl\_geom & 16 & 40.546x & 2.15e+06 & 756.7 & 2.15e+06 \\
Qwen2.5-0.5B & svd+skewkl & 16 & 40.546x & 2.15e+06 & 690.9 & 2.15e+06 \\
Qwen2.5-0.5B & svd+skewkl\_geom & 16 & 40.546x & 2.15e+06 & 682.3 & 2.15e+06 \\
Qwen2.5-0.5B & asvd+fkl & 32 & 20.305x & 1.63e+06 & 590.6 & 1.63e+06 \\
Qwen2.5-0.5B & asvd+fkl\_geom & 32 & 20.305x & 1.63e+06 & 665.4 & 1.63e+06 \\
Qwen2.5-0.5B & asvd+rkl & 32 & 20.305x & 1.63e+06 & 598.1 & 1.63e+06 \\
Qwen2.5-0.5B & asvd+rkl\_geom & 32 & 20.305x & 1.63e+06 & 601.8 & 1.63e+06 \\
Qwen2.5-0.5B & asvd+skewkl & 32 & 20.305x & 1.63e+06 & 503.3 & 1.63e+06 \\
Qwen2.5-0.5B & asvd+skewkl\_geom & 32 & 20.305x & 1.63e+06 & 495.5 & 1.63e+06 \\
Qwen2.5-0.5B & svd+fkl & 32 & 20.305x & 77866 & 537.6 & 77328 \\
Qwen2.5-0.5B & svd+fkl\_geom & 32 & 20.305x & 77866 & 535.9 & 77330 \\
Qwen2.5-0.5B & svd+rkl & 32 & 20.305x & 77866 & 648.9 & 77217 \\
Qwen2.5-0.5B & svd+rkl\_geom & 32 & 20.305x & 77866 & 617.1 & 77249 \\
Qwen2.5-0.5B & svd+skewkl & 32 & 20.305x & 77866 & 556.4 & 77310 \\
Qwen2.5-0.5B & svd+skewkl\_geom & 32 & 20.305x & 77866 & 556.4 & 77310 \\
Qwen2.5-0.5B & asvd+fkl & 64 & 10.160x & 6.49e+05 & 324.4 & 6.49e+05 \\
Qwen2.5-0.5B & asvd+fkl\_geom & 64 & 10.160x & 6.49e+05 & 362.0 & 6.49e+05 \\
Qwen2.5-0.5B & asvd+rkl & 64 & 10.160x & 6.49e+05 & 418.2 & 6.49e+05 \\
Qwen2.5-0.5B & asvd+rkl\_geom & 64 & 10.160x & 6.49e+05 & 409.2 & 6.49e+05 \\
Qwen2.5-0.5B & asvd+skewkl & 64 & 10.160x & 6.49e+05 & 343.2 & 6.49e+05 \\
Qwen2.5-0.5B & asvd+skewkl\_geom & 64 & 10.160x & 6.49e+05 & 357.5 & 6.49e+05 \\
Qwen2.5-0.5B & svd+fkl & 64 & 10.160x & 1.53e+05 & 509.7 & 1.53e+05 \\
Qwen2.5-0.5B & svd+fkl\_geom & 64 & 10.160x & 1.53e+05 & 500.1 & 1.53e+05 \\
Qwen2.5-0.5B & svd+rkl & 64 & 10.160x & 1.53e+05 & 521.0 & 1.53e+05 \\
Qwen2.5-0.5B & svd+rkl\_geom & 64 & 10.160x & 1.53e+05 & 534.2 & 1.53e+05 \\
Qwen2.5-0.5B & svd+skewkl & 64 & 10.160x & 1.53e+05 & 462.4 & 1.53e+05 \\
Qwen2.5-0.5B & svd+skewkl\_geom & 64 & 10.160x & 1.53e+05 & 453.8 & 1.53e+05 \\
\bottomrule
\end{tabular}
}
\end{table*}

\begin{table*}[h]
\centering
\tiny
\setlength{\tabcolsep}{3pt}
\renewcommand{\arraystretch}{0.88}
\caption{Completed KD-geometry sweep results for SmolLM2 models. Lower is better for PPL and KD divergence; higher is better for improvement over compressed initialization.}
\label{tab:kd-geom-full-results-smollm}
\resizebox{\textwidth}{!}{
\begin{tabular}{lllrrrr}
\toprule
Model & Method & Rank & Compression & Init PPL $\downarrow$ & Final PPL $\downarrow$ & $\Delta_{\mathrm{init}}\uparrow$ \\
\midrule
SmolLM2-135M & asvd+fkl & 32 & 10.868x & 1.00e+08 & 713.0 & 1.00e+08 \\
SmolLM2-135M & asvd+fkl\_geom & 32 & 10.868x & 1.00e+08 & 706.3 & 1.00e+08 \\
SmolLM2-135M & asvd+rkl & 32 & 10.868x & 1.00e+08 & 831.4 & 1.00e+08 \\
SmolLM2-135M & asvd+rkl\_geom & 32 & 10.868x & 1.00e+08 & 808.3 & 1.00e+08 \\
SmolLM2-135M & asvd+skewkl & 32 & 10.868x & 1.00e+08 & 690.9 & 1.00e+08 \\
SmolLM2-135M & asvd+skewkl\_geom & 32 & 10.868x & 1.00e+08 & 693.1 & 1.00e+08 \\
SmolLM2-135M & svd+rkl\_geom & 32 & 10.868x & 4.85e+08 & 2171 & 4.85e+08 \\
SmolLM2-135M & svd+skewkl\_geom & 32 & 10.868x & 4.85e+08 & 1850 & 4.85e+08 \\
SmolLM2-135M & asvd+fkl & 64 & 5.434x & 8.94e+07 & 455.2 & 8.94e+07 \\
SmolLM2-135M & asvd+fkl\_geom & 64 & 5.434x & 8.94e+07 & 430.2 & 8.94e+07 \\
SmolLM2-135M & asvd+rkl & 64 & 5.434x & 8.94e+07 & 501.7 & 8.94e+07 \\
SmolLM2-135M & asvd+rkl\_geom & 64 & 5.434x & 8.94e+07 & 486.2 & 8.94e+07 \\
SmolLM2-135M & asvd+skewkl & 64 & 5.434x & 8.94e+07 & 423.5 & 8.94e+07 \\
SmolLM2-135M & asvd+skewkl\_geom & 64 & 5.434x & 8.94e+07 & 416.9 & 8.94e+07 \\
SmolLM2-135M & svd+fkl & 64 & 5.434x & 1.52e+08 & 1127 & 1.52e+08 \\
SmolLM2-135M & svd+fkl\_geom & 64 & 5.434x & 1.52e+08 & 1079 & 1.52e+08 \\
SmolLM2-135M & svd+rkl & 64 & 5.434x & 1.52e+08 & 1378 & 1.52e+08 \\
SmolLM2-135M & svd+rkl\_geom & 64 & 5.434x & 1.52e+08 & 1339 & 1.52e+08 \\
SmolLM2-135M & svd+skewkl & 64 & 5.434x & 1.52e+08 & 1323 & 1.52e+08 \\
SmolLM2-135M & svd+skewkl\_geom & 64 & 5.434x & 1.52e+08 & 1306 & 1.52e+08 \\
\midrule
SmolLM2-360M & asvd+fkl & 16 & 36.226x & 1.79e+05 & 1089 & 1.78e+05 \\
SmolLM2-360M & asvd+fkl\_geom & 16 & 36.226x & 1.79e+05 & 1254 & 1.78e+05 \\
SmolLM2-360M & asvd+rkl & 16 & 36.226x & 1.79e+05 & 1124 & 1.78e+05 \\
SmolLM2-360M & asvd+rkl\_geom & 16 & 36.226x & 1.79e+05 & 1082 & 1.78e+05 \\
SmolLM2-360M & asvd+skewkl & 16 & 36.226x & 1.79e+05 & 1219 & 1.78e+05 \\
SmolLM2-360M & asvd+skewkl\_geom & 16 & 36.226x & 1.79e+05 & 939.6 & 1.78e+05 \\
SmolLM2-360M & svd+fkl & 16 & 36.226x & 8.09e+07 & 1850 & 8.09e+07 \\
SmolLM2-360M & svd+fkl\_geom & 16 & 36.226x & 8.09e+07 & 2078 & 8.09e+07 \\
SmolLM2-360M & svd+rkl & 16 & 36.226x & 8.09e+07 & 2199 & 8.09e+07 \\
SmolLM2-360M & svd+rkl\_geom & 16 & 36.226x & 8.09e+07 & 2039 & 8.09e+07 \\
SmolLM2-360M & svd+skewkl & 16 & 36.226x & 8.09e+07 & 1805 & 8.09e+07 \\
SmolLM2-360M & svd+skewkl\_geom & 16 & 36.226x & 8.09e+07 & 1637 & 8.09e+07 \\
SmolLM2-360M & asvd+fkl & 32 & 18.113x & 1.46e+06 & 726.5 & 1.46e+06 \\
SmolLM2-360M & asvd+fkl\_geom & 32 & 18.113x & 1.46e+06 & 754.4 & 1.46e+06 \\
SmolLM2-360M & asvd+rkl & 32 & 18.113x & 1.46e+06 & 841.9 & 1.46e+06 \\
SmolLM2-360M & asvd+rkl\_geom & 32 & 18.113x & 1.46e+06 & 927.9 & 1.46e+06 \\
SmolLM2-360M & asvd+skewkl & 32 & 18.113x & 1.46e+06 & 747.3 & 1.46e+06 \\
SmolLM2-360M & asvd+skewkl\_geom & 32 & 18.113x & 1.46e+06 & 754.4 & 1.46e+06 \\
SmolLM2-360M & svd+fkl & 32 & 18.113x & 2.01e+07 & 1462 & 2.01e+07 \\
SmolLM2-360M & svd+fkl\_geom & 32 & 18.113x & 2.01e+07 & 1547 & 2.01e+07 \\
SmolLM2-360M & svd+rkl & 32 & 18.113x & 2.01e+07 & 2408 & 2.01e+07 \\
SmolLM2-360M & svd+rkl\_geom & 32 & 18.113x & 2.01e+07 & 2276 & 2.01e+07 \\
SmolLM2-360M & svd+skewkl & 32 & 18.113x & 2.01e+07 & 1880 & 2.01e+07 \\
SmolLM2-360M & svd+skewkl\_geom & 32 & 18.113x & 2.01e+07 & 1700 & 2.01e+07 \\
SmolLM2-360M & asvd+fkl & 64 & 9.057x & 78849 & 431.6 & 78418 \\
SmolLM2-360M & asvd+fkl\_geom & 64 & 9.057x & 78849 & 449.5 & 78400 \\
SmolLM2-360M & asvd+rkl & 64 & 9.057x & 78849 & 516.1 & 78333 \\
SmolLM2-360M & asvd+rkl\_geom & 64 & 9.057x & 78849 & 529.2 & 78320 \\
SmolLM2-360M & asvd+skewkl & 64 & 9.057x & 78849 & 461.0 & 78388 \\
SmolLM2-360M & asvd+skewkl\_geom & 64 & 9.057x & 78849 & 367.8 & 78482 \\
SmolLM2-360M & svd+fkl & 64 & 9.057x & 3.53e+07 & 1152 & 3.53e+07 \\
SmolLM2-360M & svd+fkl\_geom & 64 & 9.057x & 3.53e+07 & 1159 & 3.53e+07 \\
SmolLM2-360M & svd+rkl & 64 & 9.057x & 3.53e+07 & 1453 & 3.53e+07 \\
SmolLM2-360M & svd+rkl\_geom & 64 & 9.057x & 3.53e+07 & 1495 & 3.53e+07 \\
SmolLM2-360M & svd+skewkl & 64 & 9.057x & 3.53e+07 & 1193 & 3.53e+07 \\
SmolLM2-360M & svd+skewkl\_geom & 64 & 9.057x & 3.53e+07 & 1106 & 3.53e+07 \\
\bottomrule
\end{tabular}
}
\end{table*}

\begin{table}[!h]
\centering
\caption{Paired geometry comparisons for stronger KD baselines. Lower PPL is better; $\Delta_{\mathrm{Geo}}>0$ means that adding geometry improves final PPL.}
\label{tab:rkl-skewkl-geom-pairs}
\resizebox{\linewidth}{!}{
\begin{tabular}{llrrrrrrr}
\toprule
Model & Init. & Rank & RKL & RKL+Geo & $\Delta_{\mathrm{Geo}}\uparrow$ & SkewKL & SkewKL+Geo & $\Delta_{\mathrm{Geo}}\uparrow$ \\
\midrule
Qwen2.5-0.5B & ASVD & 32 & 598.1 & 601.8 & -3.8 & 503.3 & 495.5 & 7.8 \\
Qwen2.5-0.5B & ASVD & 64 & 418.2 & 409.2 & 9.1 & 343.2 & 357.5 & -14.3 \\
Qwen2.5-0.5B & SVD & 16 & 761.5 & 756.7 & 4.8 & 690.9 & 682.3 & 8.6 \\
Qwen2.5-0.5B & SVD & 32 & 648.9 & 617.1 & 31.8 & 556.4 & 556.4 & 0.0 \\
Qwen2.5-0.5B & SVD & 64 & 521.0 & 534.2 & -13.2 & 462.4 & 453.8 & 8.6 \\
SmolLM2-135M & ASVD & 32 & 831.4 & 808.3 & 23.1 & 690.9 & 693.1 & -2.2 \\
SmolLM2-135M & ASVD & 64 & 501.7 & 486.2 & 15.5 & 423.5 & 416.9 & 6.6 \\
SmolLM2-135M & SVD & 64 & 1378 & 1339 & 38.4 & 1323 & 1306 & 16.5 \\
SmolLM2-360M & ASVD & 16 & 1124 & 1082 & 41.5 & 1219 & 939.6 & 279.5 \\
SmolLM2-360M & ASVD & 32 & 841.9 & 927.9 & -86.0 & 747.3 & 754.4 & -7.1 \\
SmolLM2-360M & ASVD & 64 & 516.1 & 529.2 & -13.1 & 461.0 & 367.8 & 93.2 \\
SmolLM2-360M & SVD & 16 & 2199 & 2039 & 159.5 & 1805 & 1637 & 167.2 \\
SmolLM2-360M & SVD & 32 & 2408 & 2276 & 132.2 & 1880 & 1700 & 179.5 \\
SmolLM2-360M & SVD & 64 & 1453 & 1495 & -41.6 & 1193 & 1106 & 86.5 \\
\bottomrule
\end{tabular}
}
\end{table}

\clearpage
\newpage
\section{Additional figures.}

\subsection{Pruning results}

\begin{figure}[!h]
  \centering
  \begin{subfigure}{0.48\linewidth}
    \centering
    \includegraphics[width=0.9\linewidth]{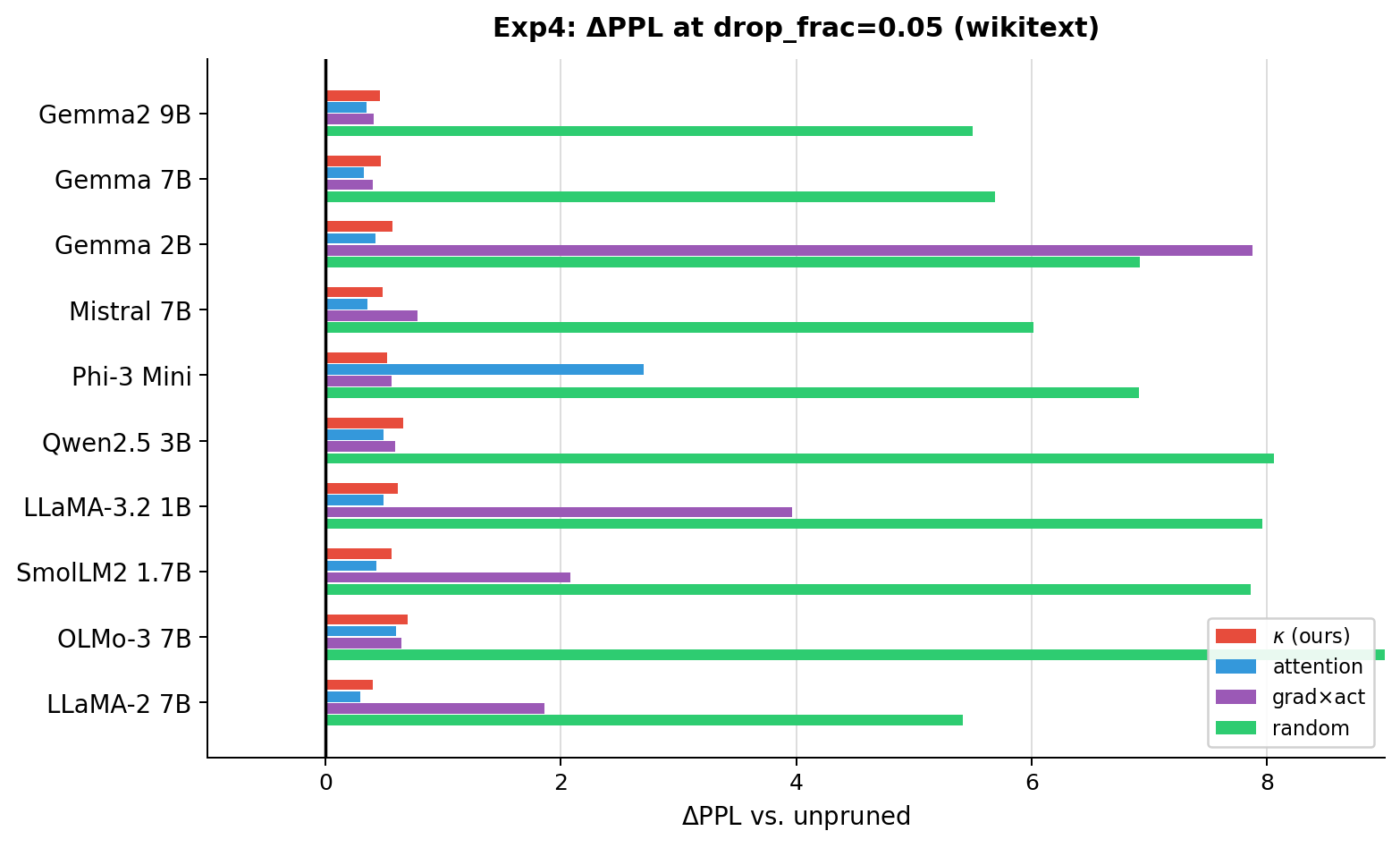}
    \label{fig:exp3_drop005}
  \end{subfigure}\hfill
  \begin{subfigure}{0.48\linewidth}
    \centering
    \includegraphics[width=0.9\linewidth]{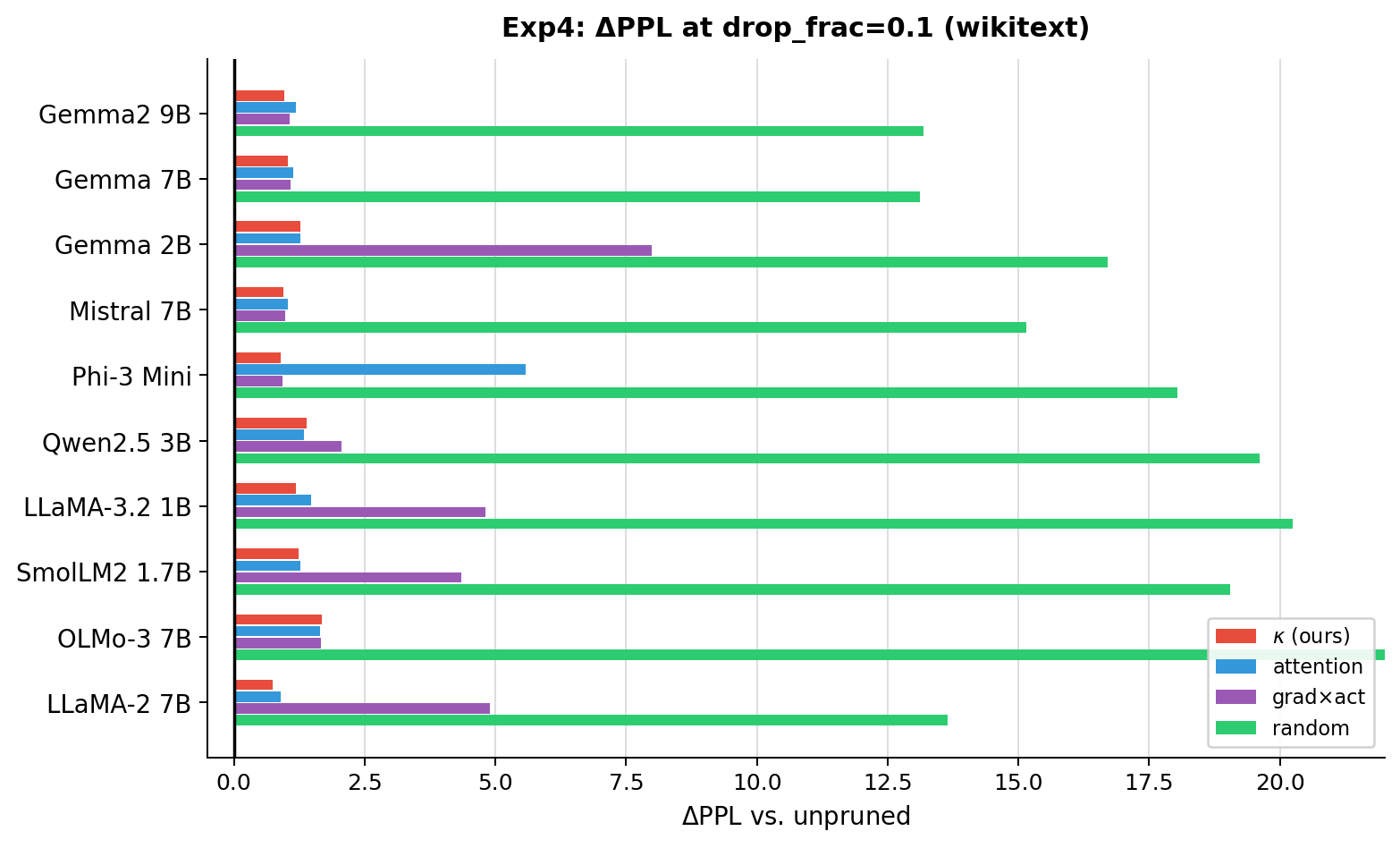}
    \label{fig:exp3_drop01}
  \end{subfigure}

  \vspace{0.5em}

  \begin{subfigure}{0.48\linewidth}
    \centering
    \includegraphics[width=0.9\linewidth]{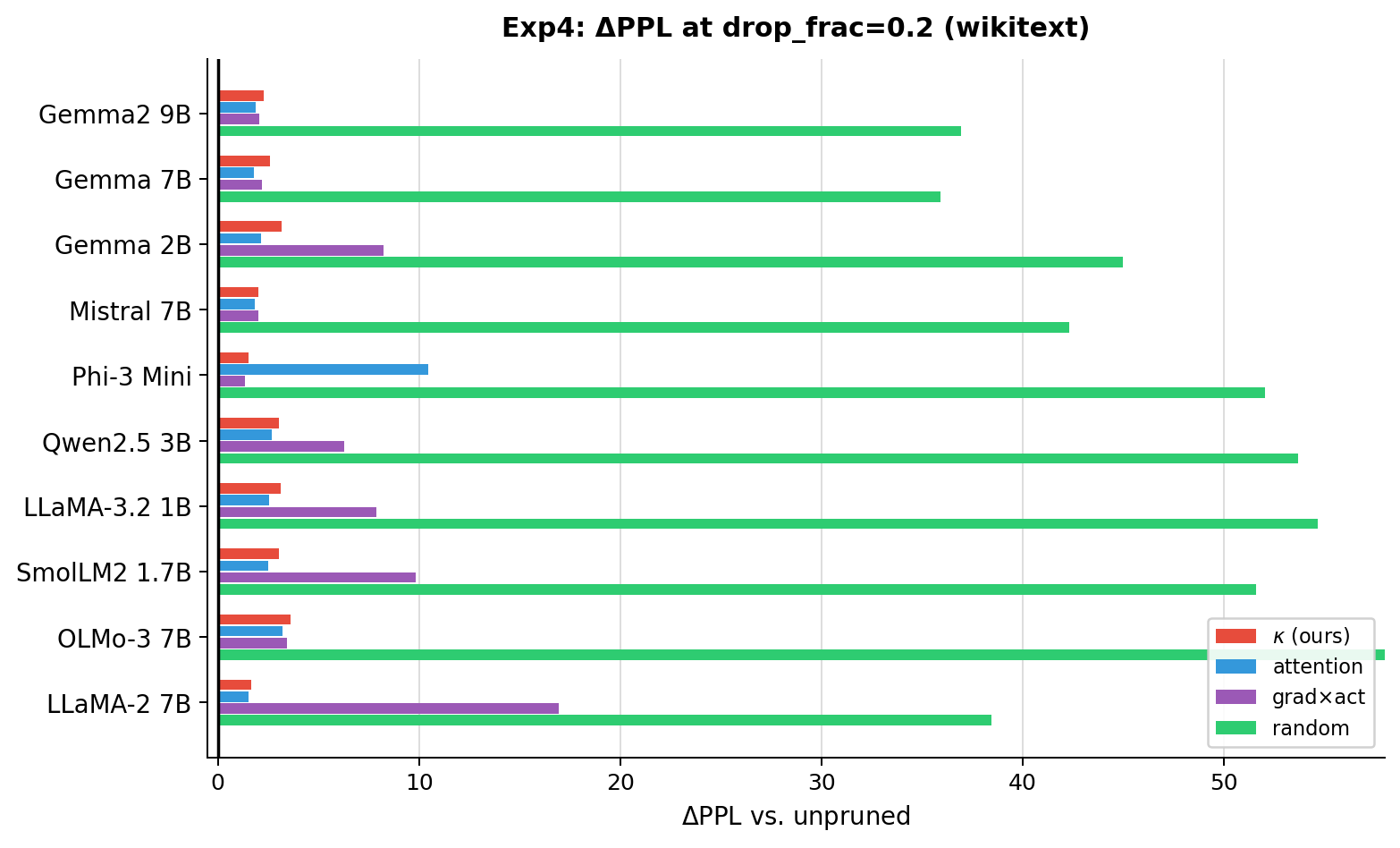}
    \label{fig:exp3_drop02}
  \end{subfigure}\hfill
  \begin{subfigure}{0.48\linewidth}
    \centering
    \includegraphics[width=0.9\linewidth]{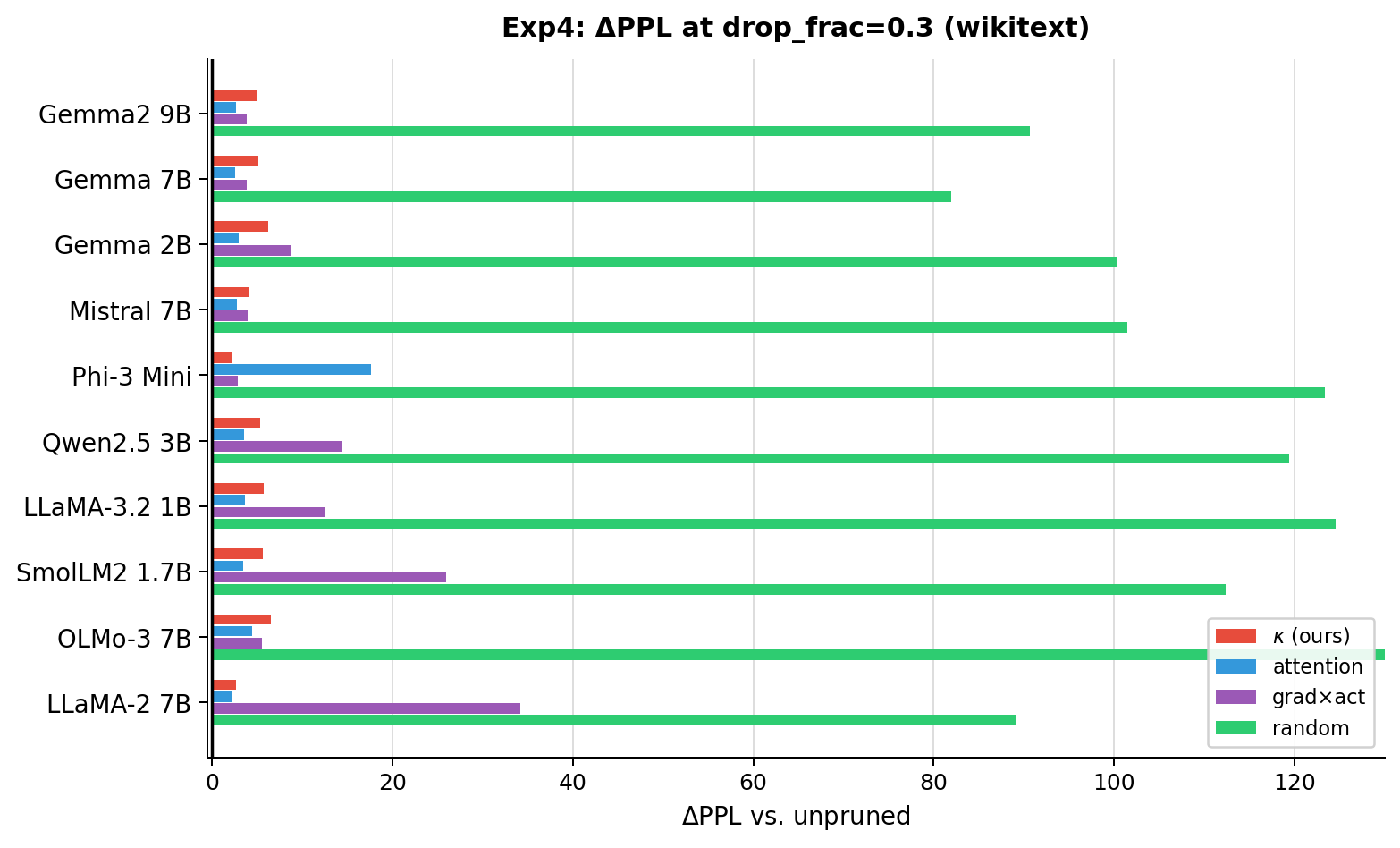}
    \label{fig:exp3_drop03}
  \end{subfigure}

  \caption{\textbf{Experiment 3: \(\Delta\mathrm{PPL}\) on \textsc{WikiText} under structured
  pruning for four drop fractions \(\rho \in \{0.05, 0.10, 0.20, 0.30\}\).}
  We compare our geometry-guided \texttt{kappa} criterion (red) to pruning based
  on attention scores (blue), a gradient-activation saliency baseline (purple), and random
  pruning (green); lower \(\Delta\mathrm{PPL}\) is better.}
  \label{fig:exp3_pruning_wikitext_all_drop}
\end{figure}

\newpage
\subsection{Direct validation of the quadratic approximation.}
\begin{figure}[!h]
  \centering
  \includegraphics[width=0.8\linewidth]{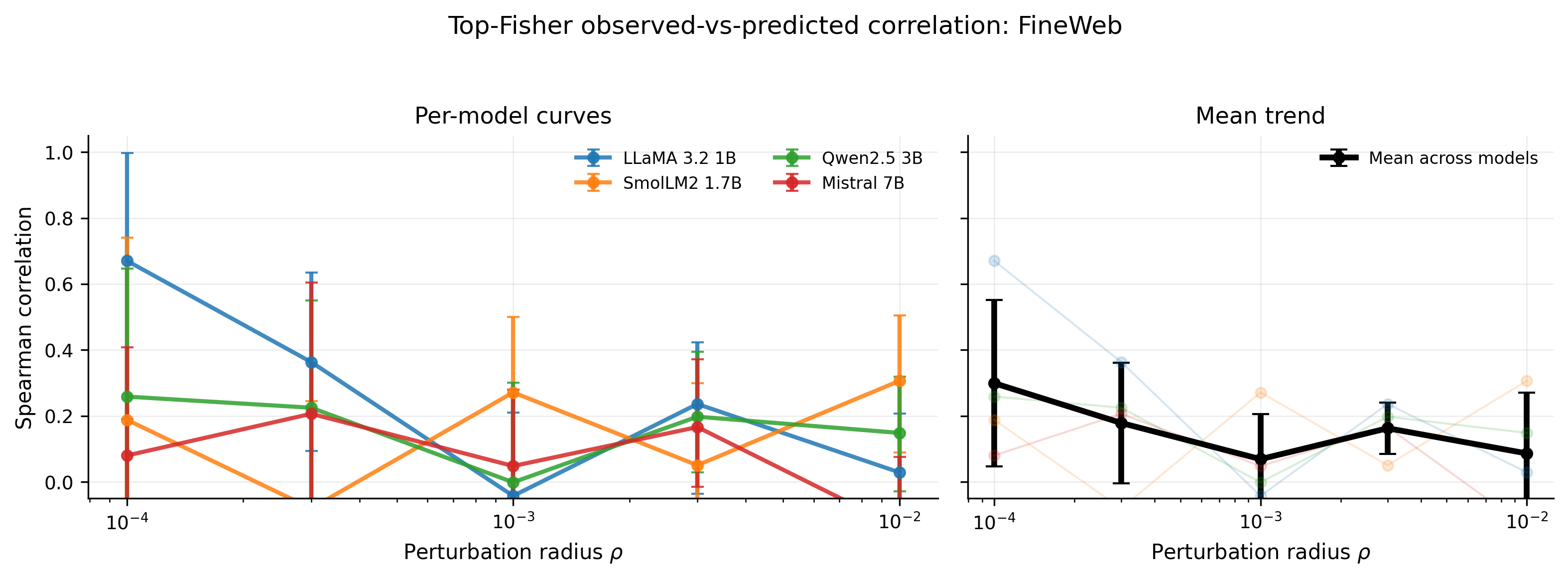}

  \vspace{0.5em}

  \includegraphics[width=0.8\linewidth]{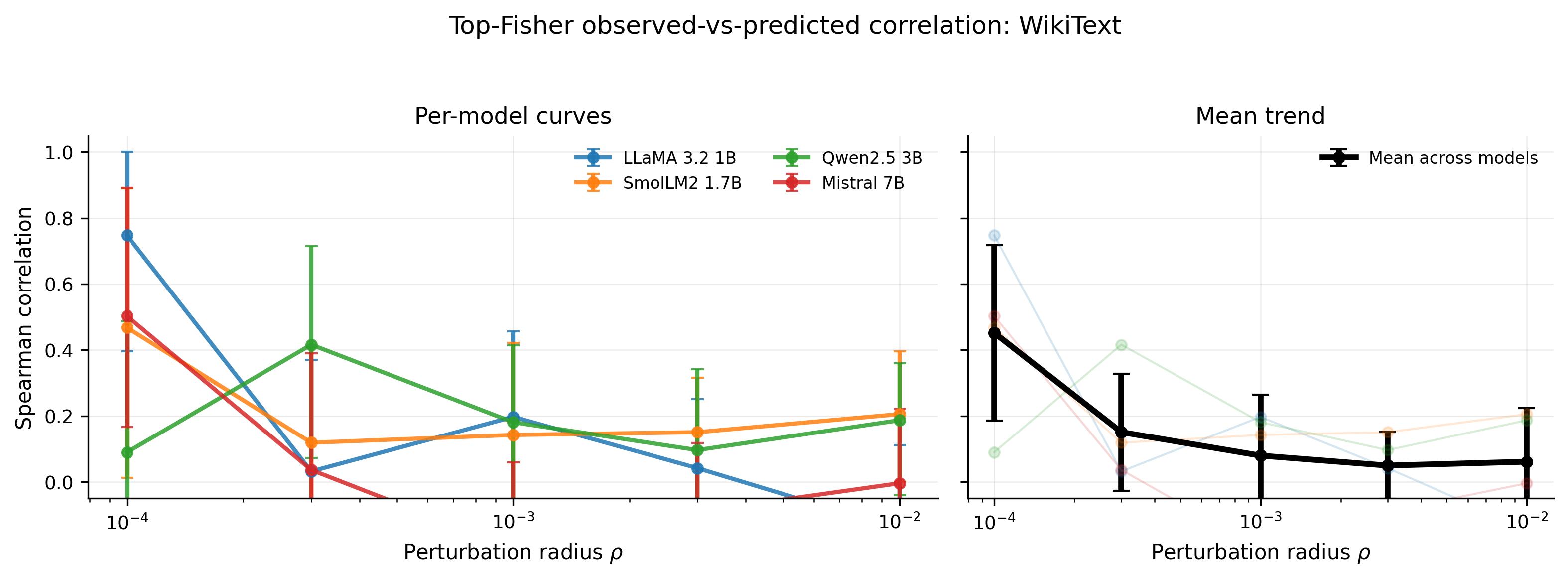}

  \caption{
  \textbf{Direct validation of the local pullback-Fisher approximation for top-Fisher
  perturbation directions on WikiText and FineWeb.} For each dataset, the left
  panel shows the Spearman correlation between the observed teacher-KL increase
  and the quadratic prediction
  \(\frac{1}{2}\alpha^2 v^\top K_\ell v\) for each model separately, as a
  function of the relative perturbation radius \(\rho\). The right panel shows
  the mean trend across models, with error bars denoting uncertainty across
  models. Correlations are highest at the smallest perturbation radii and
  generally decrease as \(\rho\) grows, consistent with the local nature of
  Theorem~\ref{th1}.
  }
  \label{fig:theory_just}
\end{figure}

\newpage
\subsection{Compute costs}
\begin{table}[!h]
\centering
\caption{Representative compute cost for geometry estimation on one dataset.
GPU hours are approximate wall-clock estimates for computing all reported
geometry statistics on the selected layer subset. Actual runtime depends on
GPU type, attention implementation, and model checkpoint.}
\label{tab:compute_cost}
\begin{tabular}{lrrrr}
\toprule
Model & Seq. len. & Examples & Model layers & GPU hours \\
\midrule
LLaMA-3.2-1B  & 512 & 32 & 16 & \(\approx 0.8\) \\
SmolLM2-1.7B  & 512 & 32 & 24 & \(\approx 1.0\) \\
Gemma-2B      & 512 & 32 & 26 & \(\approx 1.2\) \\
Qwen2.5-3B    & 512 & 32 & 36 & \(\approx 2.0\) \\
Phi-3 Mini    & 512 & 32 & 32 & \(\approx 2.2\) \\
LLaMA-2-7B    & 512 & 32 & 32 & \(\approx 4.0\) \\
Mistral-7B    & 512 & 32 & 32 & \(\approx 4.0\) \\
Gemma-7B      & 512 & 32 & 28 & \(\approx 4.5\) \\
OLMo-3-7B     & 512 & 32 & 32 & \(\approx 4.5\) \\
Gemma2-9B     & 512 & 32 & 42 & \(\approx 6.0\) \\
\bottomrule
\end{tabular}
\end{table}

\section{Limitations and Future Directions}

This work focuses on a local predictive-geometric description of hidden
trajectories in trained decoder-only transformers. We view the following points
as natural directions for extending the framework.

\paragraph{Local nature of the theory.}
Our theoretical results characterize layerwise loss-to-go functions near a
reference hidden trajectory \(X_\ell^\star\). This local viewpoint is deliberate:
it lets us connect the terminal prediction loss to measurable curvature and
observability structure inside the residual stream. A complementary direction is
to understand how these local geometries fit together across larger regions of
hidden-state space, and how such structure emerges during training.

\paragraph{Perturbation scale.}
The empirical Taylor validation depends on the perturbation radius. Very small
radii can make the measured loss changes close to numerical or sampling noise,
whereas very large radii can leave the local Taylor regime. This is why we report
performance across radii rather than at a single scale. More adaptive choices of
radius, possibly layer- or model-dependent, may give sharper local diagnostics.

\paragraph{Architectural scope.}
We study fixed trained decoder-only pre-norm transformers with causal
self-attention and a standard softmax readout. This setting already covers a
large class of language models and gives a clean causal interpretation of the
token-fiber score \(\kappa_{\ell,s}\). Extending the same predictive-geometric
view to encoder-decoder models, bidirectional transformers, mixture-of-experts
models, retrieval-augmented models, and architectures with explicit memory is an
interesting next step.

\paragraph{Computation of geometric quantities.}
The pullback-Fisher operator \(K_\ell\), its spectral summaries, and the
tokenwise scores \(\kappa_{\ell,s}\) can be estimated without materializing the
full matrix, using matrix-free JVP/VJP products. Nevertheless, these estimates
are more expensive than ordinary forward-pass statistics such as activation
norms or attention weights. Improving the efficiency of trace, spectrum, and
tokenwise-curvature estimation-for example through better probe sharing,
sketching, or amortized estimators-would make the approach more practical for
larger-scale deployment.

\paragraph{Reduction and pruning objectives.}
Our rank-allocation and token-pruning experiments are intended as controlled
tests of whether predictive geometry provides useful loss-aware signals. The
local theory suggests which directions or token updates are safer to discard
near a reference trajectory, but it is not meant to replace full compression
pipelines or task-specific pruning algorithms. In particular, direct
gradient-based heuristics can be highly competitive for a fixed pruning
intervention. A promising direction is to combine Fisher-geometric saliency with
such first-order criteria, rather than treating them as mutually exclusive.

\paragraph{Geometry-aware distillation.}
The hidden-state geometry term is designed to complement output-level
distillation by weighting student errors according to their predicted effect on
the teacher's output distribution. Our results suggest that this is most useful
with stronger autoregressive KD objectives such as reverse KL or skew KL.
Further work could study how the geometry term interacts with longer training
budgets, larger students, different low-rank parameterizations, and
instruction-tuned or chat models.

\newpage
\clearpage
\newpage
\section*{NeurIPS Paper Checklist}

\begin{enumerate}

\item {\bf Claims}
    \item[] Question: Do the main claims made in the abstract and introduction accurately reflect the paper's contributions and scope?

    \item[] Answer: \answerYes{}

    \item[] Justification: The abstract and introduction state the main theoretical and empirical contributions and explicitly describe the scope as local to neighborhoods of reference hidden-state trajectories. The claims are limited to fixed trained decoder-only transformers, local Fisher geometry, and the evaluated compression and pruning settings.

\item {\bf Limitations}
    \item[] Question: Does the paper discuss the limitations of the work performed by the authors?

    \item[] Answer: \answerYes{}

    \item[] Justification: The paper includes a dedicated Limitations section discussing the locality of the theory, architectural scope, computational cost of estimating curvature quantities, and the fact that the reduction guarantees are local rather than global compression guarantees.

\item {\bf Theory assumptions and proofs}
    \item[] Question: For each theoretical result, does the paper provide the full set of assumptions and a complete (and correct) proof?

    \item[] Answer: \answerYes{}

    \item[] Justification: The assumptions for Theorem~\ref{th1} and Propositions~\ref{pr1}--\ref{pr2} are stated in the main text, including smoothness, low-loss, causal-mask, and local-neighborhood assumptions. Full proofs are provided in Appendix~\ref{app:proofs}.

    \item {\bf Experimental result reproducibility}
    \item[] Question: Does the paper fully disclose all the information needed to reproduce the main experimental results of the paper to the extent that it affects the main claims and/or conclusions of the paper (regardless of whether the code and data are provided or not)?

    \item[] Answer: \answerYes{}

    \item[] Justification: The paper describes the model checkpoints, datasets, target-position sampling, calibration procedure, perturbation normalization, matrix-free curvature estimation, baselines, and evaluation metrics in the experimental section and Appendix~\ref{app:experimental_protocol}. The experiments use publicly available models and datasets, subject to their original access requirements.

\item {\bf Open access to data and code}
    \item[] Question: Does the paper provide open access to the data and code, with sufficient instructions to faithfully reproduce the main experimental results, as described in supplemental material?

    \item[] Answer: \answerYes{}

    \item[] Justification: The experiments use public datasets and public model checkpoints. We provide reproducibility instructions and code for matrix-free pullback-Fisher estimation, perturbation validation, rank-allocation experiments, pruning experiments, and baseline comparisons in the supplemental material and code archive.

\item {\bf Experimental setting/details}
    \item[] Question: Does the paper specify all the training and test details (e.g., data splits, hyperparameters, how they were chosen, type of optimizer) necessary to understand the results?

    \item[] Answer: \answerYes{}

    \item[] Justification: The work evaluates fixed pretrained models and does not train new models. The paper specifies the datasets, splits, context length, calibration and evaluation sampling, target positions, perturbation radii, rank thresholds, pruning fractions, numerical precision, and baseline settings in Section~\ref{sec:experiments} and Appendix~\ref{app:experimental_protocol}.

\item {\bf Experiment statistical significance}
    \item[] Question: Does the paper report error bars suitably and correctly defined or other appropriate information about the statistical significance of the experiments?

    \item[] Answer: \answerYes{}

    \item[] Justification: The main experimental results report bootstrap confidence intervals or standard errors over validation examples, perturbation directions, and randomized numerical estimates where applicable. The paper states how error bars are computed and uses nonparametric bootstrap intervals rather than assuming normality.

\item {\bf Experiments compute resources}
    \item[] Question: For each experiment, does the paper provide sufficient information on the computer resources (type of compute workers, memory, time of execution) needed to reproduce the experiments?

    \item[] Answer: \answerYes{}

    \item[] Justification: The paper reports representative wall-clock time, peak GPU memory, number of calibration tokens, number of curvature products, and hardware used for the main experiments in Table~\ref{tab:compute_cost}. We also describe the scaling of the matrix-free estimation cost with examples, layers, and probe directions.
    
\item {\bf Code of ethics}
    \item[] Question: Does the research conducted in the paper conform, in every respect, with the NeurIPS Code of Ethics \url{https://neurips.cc/public/EthicsGuidelines}?

    \item[] Answer: \answerYes{}

    \item[] Justification: The work analyzes and compresses existing public language models using public text datasets and does not collect private data, involve human subjects, or deploy a system that makes decisions about individuals.

\item {\bf Broader impacts}
    \item[] Question: Does the paper discuss both potential positive societal impacts and negative societal impacts of the work performed?

    \item[] Answer: \answerYes{}

    \item[] Justification: The paper discusses that geometry-guided compression and pruning may reduce the computational cost and energy use of language-model inference. It also notes the dual-use risk that cheaper inference can make both beneficial and harmful uses of language models more accessible.
    
\item {\bf Safeguards}
    \item[] Question: Does the paper describe safeguards that have been put in place for responsible release of data or models that have a high risk for misuse (e.g., pre-trained language models, image generators, or scraped datasets)?

    \item[] Answer: \answerNA{}

    \item[] Justification: The paper does not release a new pretrained language model, image generator, or scraped dataset. It releases analysis and compression code for existing models, whose access conditions and safeguards are governed by the original model providers.

\item {\bf Licenses for existing assets}
    \item[] Question: Are the creators or original owners of assets (e.g., code, data, models), used in the paper, properly credited and are the license and terms of use explicitly mentioned and properly respected?

    \item[] Answer: \answerYes{}

    \item[] Justification: The paper identifies and cites the existing model checkpoints, datasets, and software libraries used in the experiments. We follow the access restrictions and license terms of the original model and dataset providers, including gated-access requirements where applicable.

\item {\bf New assets}
    \item[] Question: Are new assets introduced in the paper well documented and is the documentation provided alongside the assets?

    \item[] Answer: \answerYes{}

    \item[] Justification: The new asset introduced by the paper is code for estimating the proposed geometric quantities and reproducing the experiments. The released code includes documentation, commands, configuration options, and descriptions of the expected outputs; no new dataset or pretrained model checkpoint is introduced.

\item {\bf Crowdsourcing and research with human subjects}
    \item[] Question: For crowdsourcing experiments and research with human subjects, does the paper include the full text of instructions given to participants and screenshots, if applicable, as well as details about compensation (if any)? 

    \item[] Answer: \answerNA{}

    \item[] Justification: The paper does not involve crowdsourcing, user studies, annotation tasks, or research with human subjects.

\item {\bf Institutional review board (IRB) approvals or equivalent for research with human subjects}
    \item[] Question: Does the paper describe potential risks incurred by study participants, whether such risks were disclosed to the subjects, and whether Institutional Review Board (IRB) approvals (or an equivalent approval/review based on the requirements of your country or institution) were obtained?

    \item[] Answer: \answerNA{}

    \item[] Justification: The paper does not involve human-subjects research, crowdsourcing, or collection of new data from individuals, so IRB approval or equivalent review is not applicable.
    
\item {\bf Declaration of LLM usage}
   \item[] Question: Does the paper describe the usage of LLMs if it is an important, original, or non-standard component of the core methods in this research? Note that if the LLM is used only for writing, editing, or formatting purposes and does \emph{not} impact the core methodology, scientific rigor, or originality of the research, declaration is not required.

    \item[] Answer: \answerNA{}

    \item[] Justification: The core methodology does not use an LLM as an original or non-standard research component.  

\end{enumerate}

\end{document}